  \providecommand\BibTeX{{%
    \normalfont B\kern-0.5em{\scshape i\kern-0.25em b}\kern-0.8em\TeX}}}
\theoremstyle{plain}
\newtheorem{thm}{Theorem}
\newtheorem{notation}[thm]{Notation}
\newcommand{\defeq}{\mathrel{\vcentcolon=}}
\newcommand{\Cliqz}{Cliqz\xspace}
\newcommand{\HumanWeb}{Human Web\xspace}
\newcommand{\hide}[1]{}
\newcommand{\xhdr}[1]{\vspace{1.7mm}\noindent{{\bf #1.}}}
\newcommand{\ie}{{i.e.}\xspace}
\newcommand{\cf}{{cf.}\xspace}
\newcommand{\Secref}[1]{Sec.~\ref{#1}}
\newcommand{\Eqnref}[1]{Eq.~\ref{#1}}
\newcommand{\dblsecref}[2]{Sec.~\ref{#1} and \ref{#2}}
\newcommand{\Figref}[1]{Fig.~\ref{#1}}
\newcommand{\Appref}[1]{Appendix~\ref{#1}}
\DeclarePairedDelimiter\abs{\lvert}{\rvert}
\DeclarePairedDelimiter\norm{\lVert}{\rVert}
\DeclarePairedDelimiter{\floor}{\lfloor}{\rfloor}
\DeclareMathOperator{\sgn}{sgn}
\newcommand{\R}{\mathbb{R}}
\newcommand{\N}{\mathbb{N}}
\author{Valentin Hartmann}
\affiliation{%
  \institution{EPFL}
}
\email{valentin.hartmann@epfl.ch}
\author{Konark Modi, Josep M. Pujol}
\affiliation{%
  \institution{Cliqz}
}
\email{{konark, josep}@cliqz.com}
\author{Robert West}
\affiliation{%
  \institution{EPFL}
  %\streetaddress{P.O. Box 1212}
  %\city{Lausanne} 
  %\state{Switzerland} 
  %\postcode{43017-6221}
}
\email{robert.west@epfl.ch}
\begin{document}

\fancyhead{}

%
% The "title" command has an optional parameter, allowing the author to define a "short title" to be used in page headers.

% % The default list of authors is too long for headers.
% \renewcommand{\shortauthors}{V. Hartmann et al.}

\title{Privacy-Preserving Classification with Secret Vector Machines}

% \author{Anonymous Author}
% \affiliation{%
%   \institution{Anonymous Institution}
% }
% \email{anonymous@email}

\begin{abstract}
Today, large amounts of valuable data are distributed among millions of user-held devices, such as personal computers, phones, or Internet-of-things devices. Many companies collect such data with the goal of using it for training machine learning models allowing them to improve their services. User-held data is, however, often sensitive, and collecting it is problematic in terms of privacy.
We address this issue by proposing a novel way of training a supervised classifier in a distributed setting akin to the recently proposed federated learning paradigm, but under the stricter privacy requirement that the server that trains the model is assumed to be untrusted and potentially malicious.
We thus preserve user privacy by design, rather than by trust.
In particular, our framework, called secret vector machine (SecVM), provides an algorithm for training linear support vector machines (SVM) in a setting in which data-holding clients communicate with an untrusted server by exchanging messages designed to not reveal any personally identifiable information.
We evaluate our model in two ways. First, in an offline evaluation, we train SecVM to predict user gender from tweets, showing that we can preserve user privacy without sacrificing classification performance. Second, we implement SecVM's distributed framework for the Cliqz web browser and deploy it for predicting user gender in a large-scale online evaluation with thousands of clients, outperforming baselines by a large margin and thus showcasing that SecVM is suitable for production environments.

\end{abstract}

\maketitle

\section{Introduction}
\label{sec:intro}
%    What is the problem?

With the growing number of smartphones, intelligent cars and smart home devices, the amount of highly valuable data that is spread among many devices increases at a rapid pace. Those devices are typically in possession of end users and so is the data produced by and stored on them. Of course companies are interested in making use of this data, e.g., to detect usage patterns, make better\hyp informed business decisions, and ultimately improve their products.
As an example, consider the case of a web browser vendor wanting to infer demographics from users' browsing histories in order to automatically change the default behavior for hiding adult-only content from users inferred to be minors,
or to show relevant website suggestions to users based on their inferred age groups.

%    Why is it interesting and important?

The classical way of building the necessary prediction model would be to collect the users' data on a central server and then run a machine learning algorithm on it. But this comes with severe disadvantages. First, the user has to put the necessary trust in the data\hyp collecting entity. Even in case of trust, the relationship between the two parties is very imbalanced; new regulations such as the European Union's General Data Protection Regulation (GDPR) \cite{eu:gdpr} and e-privacy frameworks try to rebalance the relationship to be more fair. But still, even in the case of perfect regulation, the collection of user data incurs a privacy risk. There are many ways in which privacy could be compromised: hacks leading to a data breach \cite{data:breaches}, disgruntled or unethical employees that use the data for their own benefit \cite{leak:insider}, companies going bankrupt and selling the data as assets \cite{leak:bankrupt}, and of course government-issued subpoenas and backdoors \cite{legal:theguardian, legal:verizon}. All in all, it is safe to assume that gathering users' data puts their privacy at risk, regardless of the comprehensiveness of the data management policies in place.

%    Why is it hard? (E.g., why do naive approaches fail?)

It is thus desirable to be able to build prediction models without learning any personally identifying information about the individual users whose data is used in the training process.
For instance, the party who is training the model should not be able to infer labels or feature values of individual users.
This requirement immediately precludes us from using the standard machine learning setup, where feature vectors for all users are stored in a feature matrix, labels in a label vector, and an optimization algorithm is then used to find the model parameters that minimize a given loss function.

The issue with the vanilla machine learning setup is that the party training the model sees all data---features and labels---at the same time, which typically makes it easy to infer user identities, even if the data is pseudo\hyp anonymized, i.e., if actual user ids have been replaced with random identifiers.
One way to achieve this is by tying together multiple features associated with the same user, as was the case with the now\hyp infamous AOL data breach, where users thought to be anonymized were identified by analyzing all their search queries together \cite{aol:privacy}.
Moreover, user privacy can also be compromised by correlating the pseudo\hyp anonymized data with external datasets, which was done with the Netflix movie rating dataset \cite{narayanan2008robust}.

A recently proposed way forward is given by the paradigm of federated learning \cite{mcmahan2016federated}.
Here, model fitting does not happen locally on the machines of a single party;
rather, it works on distributed data without the need for central aggregation.
In essence, federated learning models perform gradient descent in a distributed way, where, instead of sharing their raw data with the server that is building the model, clients only share the gradient updates necessary to improve the loss function locally for their personal data.

While this is a promising approach, it was not originally designed with the goal of preserving privacy.
Later extensions have addressed this issue, by injecting random noise into the data on the client-side before sending it to the server
\cite{abadi2016deep} or by using cryptography \cite{bonawitz2017practical}.

\xhdr{Present work: Secret vector machines (SecVM)}
This work proposes a novel and different approach to training a supervised machine learning classifier in a privacy\hyp preserving manner.
Crucially, in our setup, the server is assumed to be \textit{untrusted,} i.e., potentially malicious.
Our key observation is that support vector machines (SVM), a popular machine learning model, are particularly well\hyp suited for privacy\hyp preserving classification, due to the hinge loss function used in its objectives:
when features and labels are binary (as is often the case), the SVM gradient can only take on the three discrete values $-1$, 0, and 1, which means particularly little information about the client is revealed.

Starting from this observation, we identify additional ingredients necessary to maintain user privacy and design a distributed protocol for training SVMs.
We term our algorithm \textit{secret vector machine (SecVM)}.
As in the federated learning setup \cite{mcmahan2016federated}, the SecVM model is trained on the server side in collaboration with the data\hyp owning clients.
What sets SecVM apart from prior proposals for privacy\hyp preserving federated learning \cite{geyer2017differentially,mcmahan2017learning} is that it assumes an untrusted server and works without adding random noise.
Instead, it leverages the above observation regarding the SVM loss and
makes the recovery of the original data from the updates shared by clients impossible by means of feature hashing, splitting updates into small chunks, and sending them to the server according to a particular communication protocol. Other popular classification algorithms such as logistic regression or neural networks do not share the property of SVM of having an integer\hyp valued gradient, and are therefore not suited for preserving privacy in our setting.

% For a visual description of the SecVM setup, see \Figref{fig:diagram}.

\xhdr{Contributions}
Our main contributions are as follows.
\begin{itemize}
    \item We propose secret vector machines (SecVM), a novel method for \textbf{training linear SVM} classifiers with integer features
    in a \textbf{privacy\hyp preserving} manner
    (\dblsecref{sec:problem statement}{sec:solution}).
    \item In an \textbf{offline evaluation}, we apply SecVM to a large dataset of tweets in order to infer users' gender from the words contained in their tweets, showing that we can maintain user privacy without lowering classification performance, compared to a vanilla SVM model (\Secref{sec:twitter}).
    \item We implement SecVM's client--server setup for
the \Cliqz{} web browser \cite{cliqz}
% a web browser platform
    and successfully deploy it in a large-scale \textbf{online evaluation} with thousands of participating clients,
    outperforming baselines on a gender prediction task by a large margin and
    thus demonstrating the feasibility of our model in production settings (\Secref{sec:cliqz}).
\end{itemize}

By exploiting specific properties of support vector machines, our method overcomes the shortcomings of other classification models in a privacy\hyp preserving context (cf. discussion in \Secref{sec:discussion} for details). Moreover, due to their good performance on various classification tasks, SVMs are of high practical relevance for both researchers as well as industry practitioners, with recent applications in material sciences \cite{1757-899X-436-1-012020}, geology \cite{CHEN2017314}, remote sensing \cite{Liu2017} and medicine \cite{10.1371/journal.pone.0161501, 7755785, PMID:27919211} --- the latter being a particularly privacy\hyp sensitive area.
Also, although our exposition considers binary labels, it can readily be extended to the multi-class case, via schemes such as one\hyp vs.\hyp one or one\hyp vs.\hyp all \cite{hsu2002comparison}.

\section{Related work}
\label{sec:relwork}
There are two main approaches to the problem of extracting information while at the same time protecting privacy. In the first, an altered version of the data is released. In the second,  all data stays on the data owners' devices, but they actively participate in the information extraction procedure.

\xhdr{Releasing the data} Information about individuals can be hidden by perturbing the data randomly, as applied in learning decision trees \cite{agrawal2000privacy} and other settings \cite{dwork2006calibrating,Evfimievski:2002:RPP:772862.772869}.
The notion of \textit{\(k\)-anonymity} \cite{samarati1998protecting} requires each record to be indistinguishable from at least \(k-1\) other records, which can be achieved by suppressing or generalizing certain attributes. Its shortcomings in terms of privacy have been addressed by \textit{$l$-diversity} \cite{machanavajjhala2006diversity}, which has further been refined to \textit{$t$-closeness} \cite{li2007t}.

\xhdr{Keeping the data}
A recently proposed solution for training machine learning models on distributed data is called \textit{federated learning} (FL) \cite{mcmahan2016federated}: a server distributes the current version of the model to the data\hyp owning clients, which then return only updates to this model, rather than their raw data. While FL's original focus was not on privacy, algorithms for extending it in this direction have been proposed \cite{geyer2017differentially,mcmahan2017learning}. These extensions build on techniques due to Abadi et al.\ \cite{abadi2016deep}, by randomly sampling a subset of users that should participate in a training round, and adding random noise to their updates.
In this setting, a client's private data is not to be protected against a malicious server (the server is assumed to be trusted), but rather against other clients participating in the training.
FL extensions that come closer to our setting are based on local differential privacy (LDP) \cite{liu2020fedsel,truex2019hybrid}. Here, noise is added to the model updates sent by the users to prevent them from leaking sensitive information. The mathematical guarantee provided by these methods is based on the widely accepted differential privacy, but it comes at the cost of significantly reducing the classification performance of the models. Even when reducing the required amount of noise by sending only one dimension of each user's model update per training iteration \cite{liu2020fedsel}, the reported increase in misclassification rate as compared to non\hyp private training can be larger than 0.2 for an SVM. The same holds for combining LDP with threshold homomorphic encryption \cite{truex2019hybrid}, where the F1\hyp score of the trained SVM drops by more than 0.1. As we show in our experiments (\Secref{sec:twitter}), the performance drop when using SecVM is negligible. In addition, the aforementioned method based on threshold homomorphic encryption \cite{truex2019hybrid} as well as other solutions based on cryptography \cite{bonawitz2017practical} require a trusted public key infrastructure, whereas SecVM can be used with any readily available anonymization network such as TOR (see \Secref{sec:solution}).

\section{Problem statement}
\label{sec:problem statement}
We consider the problem of training a linear SVM on distributed data, i.e., the data is spread among multiple Internet\hyp connected client devices owned by individual parties. Each client $i$'s dataset consists of a feature vector $x_i$ and a label $y_i$.
For reasons that will become clear later, we assume that features take on integer values, which they do, e.g., if they represent counts.
In addition to the clients, there is a separate party, the server owner, who would like to use the data for training a classification model. While this intention is supported by the clients, they do not want the server to learn anything about individual clients.

Why do we say ``anything'' above, when in fact we only want to protect \textit{sensitive} client information?
First off, the server should certainly never know the complete feature vector together with the label.
However, the feature vector alone, or a part of it, could already contain sensitive information:
if only a subset of the features suffices to identify a user, then the other features contain additional (potentially sensitive) information.
Likewise, if a single feature is unique to a certain user, the feature--label combination gives away their label.
And finally, a single feature alone can already contain sensitive information, e.g., if features are strings users type into a text box.
We therefore formalize our privacy requirement as follows:
\begin{quote}
\textbf{Privacy requirement.}
\textit{The server must not be able to infer the label or any feature value for any individual client.}
\end{quote}

\xhdr{Server capabilities}
In order to infer user information, we assume a malicious server can
(1)~observe incoming traffic,
(2)~send arbitrary data to clients, and
(3)~control an arbitrarily large fraction of the clients by introducing forged clients.

\section{Proposed solution}
\label{sec:solution}

The loss function of a binary classification model often takes the form
\begin{equation}
    \label{eq:general_loss}
    J(w) = \frac{1}{N}\sum_{i=1}^N L(w, x_i, y_i) + \lambda R(w),
\end{equation}
where \(N\) is the number of training samples (in our case, users), \(x_i\in\mathbb{R}^d\) user $i$'s (in our case, integer) feature vector, \(y_i \in \{-1,1\}\) user $i$'s label, \(w\in\R^d\) the parameter vector of the model, \(L\) the loss for an individual sample, \(R\) a regularization function independent of the data, and \(\lambda > 0\) the regularization parameter.
When using a subgradient method to train the model, the update for dimension \(j\) becomes
\begin{equation}
    \label{eq:general_update}
    w_j \leftarrow w_j - \eta \left(\frac{1}{N} \sum_{i=1}^N \frac{\partial L(w, x_i, y_i)}{\partial w_j} + \lambda \frac{\partial R(w)}{\partial w_j}\right),
\end{equation}
where \(\eta > 0\) is the learning\hyp rate parameter. In the case of linear SVMs, we have \(L(w, x_i, y_i) = \max\{0, 1-y_i w^T x_i\}\) and
\begin{equation}
    \frac{\partial L(w, x_i, y_i)}{\partial w_j} = \delta(1-y_i w^T x_i) \, y_i x_{ij},
\end{equation}
where $\delta(x)=1$ if $x>0$, and $\delta(x)=0$ otherwise.

The key observation that federated learning \cite{mcmahan2016federated} and also our method rely on is that the update of \Eqnref{eq:general_update} is the sum over values that each only depend on the data of a single user $i$.
To train the model, we follow a process akin to federated learning:
\begin{enumerate}
    \item The server sends out the current model $w$ to the clients.
    \item Each client $i$ computes its local update \(\nabla_w L(w, x_i, y_i)\) and sends it back to the server.
    \item The server sums up the individual updates, computes \(\nabla_w R(w)\), and makes an update to the model.
\end{enumerate}
To meet our privacy requirements, we adopt a slightly more nuanced protocol, described next.
To begin with, we will work under the following \textbf{temporary assumptions,} which will later be dropped to accommodate real-world settings:

% \textit{
\begin{enumerate}
    \item[A1.] There is a trusted third party that first collects the update data (as specified below) from the clients in every training iteration and then sends it as a single database to the server.
    \item[A2.] The server honestly follows the above three-step training procedure, which essentially means that the server sends the same correctly updated weight vector to all clients.
\end{enumerate}
% }

We also assume that the client code is available for inspection by the users to make sure the clients follow the proposed protocol. The server code, however, need not be public, since SecVM protects against a malicious server.

\xhdr{Hiding feature values from the server: hashing}
We do not want the server to learn any feature values. For this purpose, we make features less specific by grouping them via hashing. Assume we have \(d\) different features, i.e., \(w\in\R^d\). We then choose a number \(1 < k < d\) and reduce $w$'s dimension to \(\tilde{d}\defeq\floor{d/k}\) by applying a hash function to each feature index and taking the result modulo \(\tilde{d}\) as its new index (its bin). If several features of a user hash into one bin, their values are added. In expectation, this results in at least \(k\) features, indistinguishable from each other, per bin.

Usually, hashing is used to reduce the problem dimensionality, and thereby resource consumption, and collisions are a necessary but unwanted side effect, as they may decrease model accuracy \cite{shi2009hash}.
For us, on the contrary, a high number of collisions is desired, since it implies an increase in privacy:
$k$ features hashing to the same bin implies that we cannot distinguish between those $k$ features.
Lemmata~\ref{lemma:1}~and~\ref{lemma:2} show that non\hyp colliding features, which would undermine privacy, are highly unlikely. We can, e.g., guarantee a high minimum number of collisions for each feature with overwhelming probability.

A malicious server could try to choose the hash function in such a way that certain features do not collide and can thus be distinguished.
This can, however, easily be prevented by letting the hash depend on a trusted unpredictable random value, as, e.g., provided by NIST \shortcite{nist}.

To see why we want to hide feature values from the server, consider,
e.g., the case where the features are the frequencies with which strings occur in a text field into which a user types text:
here, the feature itself could already reveal sensitive information, such as email addresses.
Features could also help determine a user's label: if, e.g., each feature denotes whether or not a specific URL has been visited by the user, then the server could infer the label of the sole visitor $i$ of URL $j$
(e.g., for the URL \url{http://github.com/User123}, the sole visitor is most likely User123 themselves), by the
% sign of the
corresponding update vector entry $\delta(1-y_i w^T x_i) \, y_i x_{ij} = y_i$.

\xhdr{Hiding labels from the server: splitting update vectors}
In addition to keeping the features private, we also want to prevent the server from knowing the label of a user.
Recall that, during training, only the update vector \(\nabla_w L(w, x_i, y_i)\) (and not the label \(y_i\)) is sent to the server.
Nonetheless, in the case of a linear SVM with binary features, if one of the entries \(\delta(1-y_i w^T x_i) \, y_i x_{ij}\) of the update vector is non-zero, then that very entry equals---and thus reveals---the label.
If (and only if) the server knew to which user the update vector belongs, it would know this user's label.
Since by temporary assumption A1, the server is given the update vectors only, it would have to identify the user via the update vector, which we must hence prevent.
To do so, we use a property of the subgradient update (\Eqnref{eq:general_update}):
not only is one user's local update independent of the other users',
but also the update for one entry \(w_j\) does not rely on the update for any other entries, which means we can update each entry individually.
We exploit this fact by splitting the update vector
$\nabla_w L(w, x_i, y_i)$
into its individual entries and sending each entry
\(\delta(1-y_i w^T x_i) \, y_i x_{ij}\)
together with its index $j$ as a separate package.
In the case of binary \(x_{ij}\), \(\delta(1-y_i w^T x_i) \, y_i x_{ij}\) can only take on the values $-1$, $0$ and $1$, therefore making it impossible for the server to determine which packages stem from the same feature vector, since they cannot be discerned from other clients' packages.
The case of binary features can easily be extended to integer features: instead of sending one package containing \(y_i x_{ij}\),
the client may send \(\abs{x_{ij}}\) packages containing \(y_i\sgn(x_{ij})\).
(Note that packages where \(\delta(1-y_i w^T x_i)=0\) need not be sent.)

Since, after this change to the protocol, the server only receives packages containing 1 or $-1$ and the respective feature index,
this is equivalent to knowing the number $N_j^+$ of positive packages and the number  $N_j^-$ of negative packages received for each feature index $j$.
As mentioned before, \(\delta(1-y_i w^T x_i) \, y_i x_{ij} \in \{0, y_i\}\), i.e., only users with label 1 contribute to  $N_j^+$ and only users with label $-1$ contribute to  $N_j^-$.
Determining the label of a user is thus equivalent to determining
whether the user's update vector contributed to $N_j^+$ or to $N_j^-$.
The confidence with which this can be done is vanishingly small, as we show in Lemma \ref{lemma:3}, even for the (from a privacy perspective worst) case that the server knows all of a user's features from some external source (the maximum a server that does not already know the label could possibly know).

\xhdr{Dropping temporary assumption A1}
We now drop the assumption that the server receives the update data as a single database. In the new setting, the server additionally receives
(1)~the IP address from which a package was sent and
(2)~the time at which a package arrives.
We need to make this information useless in order for the privacy guarantees from above to still hold, as follows.
% \xhdr{Removing IP addresses: anonymization network}
First,
we route all messages through an anonymity network, such as \textit{Tor} \cite{syverson2004tor}, or through a similar proxy server infrastructure, such as the one we use in our online experiment (\Secref{sec:cliqz}), thereby removing all information that was originally carried by the IP address.
Without this measure, the server could directly link all packages to the client that sent them, thus effectively undoing the above\hyp described splitting of feature vectors.
% \xhdr{Making timestamps useless: randomizing sending times}
Second, to remove information that the arrival times of packages might contain, we set the length of one training iteration to \(n\) seconds and make clients send their packages not all at once, but spread them randomly over the \(n\) seconds, thus making the packages' arrival times useless for attacks.
Without this measure, all update packages from the same user might be sent right after one another, and the server would receive groups of packages with larger breaks after each group and could thus assume that each such group contains all packages from exactly one user.

\xhdr{Dropping temporary assumption A2}
Finally, we drop the assumption that the server honestly follows the training procedure by sending the same correctly updated weight vector to all clients in each iteration.
% \xhdr{Ensuring every client receives the same data: fetching the data multiple times}
In order to not depend on this assumption, we give clients a way to recognize when the server violates it:
Instead of requesting the training data in an iteration once from the server, the clients request it multiple times.
Only if they get the same response every time do they respond; otherwise they must assume an attack.
Since the clients' requests are routed through an anonymization network, the server cannot identify subsequent request from the same client and cannot maliciously send the same spurious weight vector every time. To reduce communication costs, the clients don't actually request the training data multiple times, but only once in the beginning, and afterwards request hashes of it.
As an even safer countermeasure, one could distribute weight vectors and all auxiliary information via a blockchain. This way, each user would be able to verify the integrity of the data they receive.

In the case that one does not prevent the server from distributing different data to different clients, the server could, e.g., distribute an all-zero weight vector
\(w=0\in\R^{\tilde{d}}\)
to all users in a first step.
All of them would then respond with a non-zero update revealing all of their non-zero features, since then \(1-y_i w^T x_i = 1\) for arbitrary \(y_i\) and \(x_i\). In the next step, the server would send out the zero weight vector to all but one user \(\ell\). This user would instead receive the weight vector \(e_1=(1,0,\dots,0)\). If \(y_\ell x_{\ell 1} \leq -1\), then user \(\ell\) would in this round not send back any updates. Otherwise the server would do the same with \(-e_1\), then with \(e_2\), \(-e_2\), and so on, until it receives fewer updates than in the last iteration.
It would then know that all the missing updates come from one user, and would thus be able to reconstruct this user's label and feature vector (however, still hashed).
In a similar fashion, the server could undermine single users' privacy by manipulating the time they have for sending their updates, \ie, the length of the training iteration. All users would get a very long deadline, and one user \(\ell\) a very short one. Then the packages arriving before the short deadline would mostly come from user \(\ell\).

\xhdr{SecVM: model and protocol}
To conclude, we summarize the final SecVM model.
As an initial step, all clients hash their feature vectors into a lower\hyp dimensional space.
    Then the training procedure begins.
    The server sends out the current parameter vector \(w\) to all clients.
    Each client \(i\) computes its local update \(g_{i} := \nabla_w L(w, x_i, y_i) = \delta(1-y_i w^T x_i) \, y_i x_i\) and splits this vector into its individual entries \(g_{ij}\).
    These entries, together with their indices \(j\), are sent back to the server as individual packages at random points in time via a proxy network.
    The server sums up the updates \(g_{ij}\) corresponding to the same entry \(j\) of the parameter vector and updates the weight vector \(w\) accordingly.
    This procedure is repeated until the parameter vector has converged.
This model meets the privacy requirement from \Secref{sec:problem statement}, as it does so under temporary assumptions A1 and A2, and, as we have shown, we may drop these assumptions without violating the privacy requirement.

\section{Offline evaluation: gender inference for Twitter users}
\label{sec:twitter}
We implemented and tested our approach in a real application with users connected via the Internet, as described in \Secref{sec:cliqz}. However, to assess its feasibility and to determine suitable hyperparameters, we first worked with offline data.
An and Weber \cite{AnW16} generously provided us with a dataset containing tweets collected from nearly 350,000 Twitter users alongside demographic data inferred from the users' profiles. Around half of the users are labeled as male, and half of them as female.

As the classification task for the SVM, we decided to predict the gender of users from the words they used in their tweets. The feature space is therefore very high\hyp dimensional; after preprocessing, we obtained
96M
distinct words. On the other hand, the feature vectors are very sparse, with only 1,826 words per user on average.
The properties of the dataset and the classification task are very similar to what we would expect to encounter in our later online experiment, so we deemed this a good initial evaluation of our method.

We only compare SecVM --- using different levels of hashing --- with a vanilla SVM that doesn't use hashing, and not with other classification methods (logistic regression, neural networks etc.). This is due to the fact that their typically real\hyp valued gradient entries may reveal private information, which makes those methods unsuited for fulfilling our privacy requirement (see also \Secref{sec:discussion}). Therefore the choice is not between training an SVM or training a different classifier, but rather between training an SVM or not being able to train a classifier at all due to the privacy constraints. Or, to put it differently: our goal is not to achieve a classification performance that can compete with non\hyp privacy\hyp preserving methods, but to learn a useful classifier in our restricted setting.

\xhdr{Methodology}
For subgradient methods, one has to decide on a step size \(\eta\). We chose the step size of the Pegasos algorithm \cite{Shalev-Shwartz2011}, \ie, \(\eta = 1/\lambda t\), where \(t\) is the current iteration and \(\lambda\) the regularization parameter;
but whereas Pegasos performs stochastic subgradient training, we used regular subgradient training, where all training samples, rather than a subset, are used to compute each update.

For training and testing an unhashed SVM, we first split the dataset into two equally large sets \(A\) and \(B\), and each of those sets into 90\% training and 10\% test data. Then we trained an SVM with different values of \(\lambda\) on \(A\)'s training set, chose the one with the best accuracy on \(A\)'s test set, and did another round of training on \(B\)'s training set with this \(\lambda\).

\begin{figure*}
    \centering
    \begin{subfigure}[t]{0.49\textwidth}
        \includegraphics[width=\textwidth]{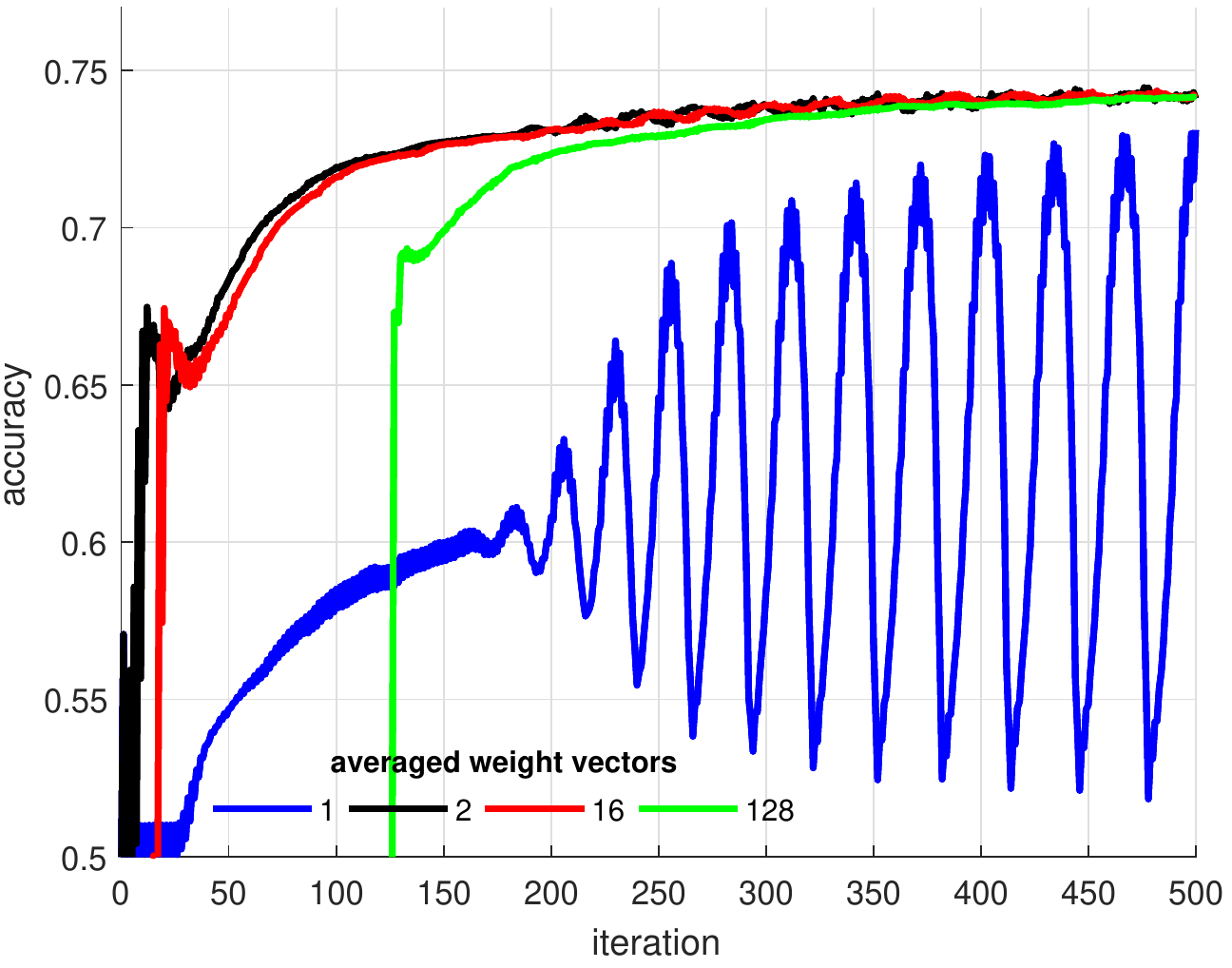}
    	\caption{No feature hashing used;
    	for various numbers of averaged weight vectors (\cf\ \Secref{sec:twitter}), each as one curve.}
    	\label{fig:twitter_unhashed}
    \end{subfigure}
    \hfill
    \begin{subfigure}[t]{0.49\textwidth}
        \includegraphics[width=\textwidth]{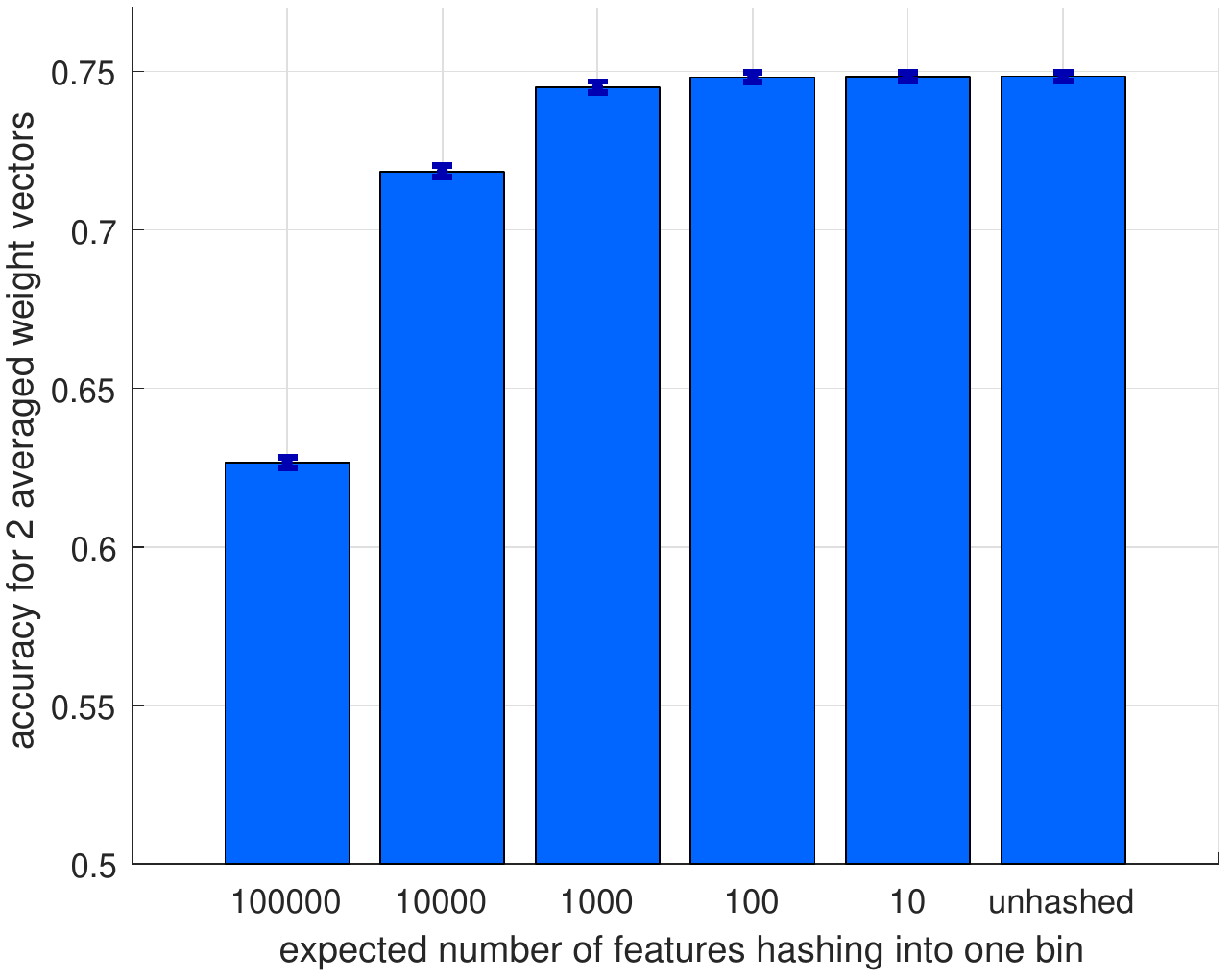}
    	\caption{Accuracy achieved for different numbers of hash bins
    	(averaging two weight vectors; cf.\ \Figref{fig:twitter_unhashed}).
    	}
    	\label{fig:twitter_bin_numbers}
    \end{subfigure}
    \caption{Offline evaluation on Twitter gender prediction task (\Secref{sec:twitter}).}
\end{figure*}

\xhdr{Results}
\Figref{fig:twitter_unhashed} reports the performance of this second SVM on \(B\)'s test set. Choosing any \(\lambda\) between \(10^{-7}\) and \(10^{-2}\) only made a marginal difference in accuracy.
The subgradient method is not a descent algorithm, \ie, it does not decrease the value of the objective function in each step. The effect of this can be seen in the blue curve of \Figref{fig:twitter_unhashed}:
test accuracy fluctuates a lot.
Therefore, inspired by Corollary~1 of the Pegasos paper \cite{Shalev-Shwartz2011}, we averaged the weight vector of subsequent iterations and investigated how the number of averaged weight vectors affects test performance. \Figref{fig:twitter_unhashed} shows that averaging only two weight vectors already gives not only an almost monotonic learning curve, but also a prediction accuracy strictly above that achieved when not using averaging.
For the following results, we thus always averaged two subsequent weight vectors, as averaging more vectors only led to slower convergence.
We obtained an accuracy of 75.2\% on the original, unhashed data.

To evaluate the influence of hashing on accuracy, we did 141 random splits of the entire dataset into 90\% training and 10\% test, with fixed \(\lambda=10^{-4}\). \Figref{fig:twitter_bin_numbers} shows means and standard deviations of the fraction of correctly predicted labels for different numbers of hash bins.
For instance, when reducing the dimension of the original feature and weight vectors by a factor of 1,000 (\ie, hashing the 96M original features into 96K bins), accuracy only drops from 75.2\% to 74.7\%, showing that even very aggressive hashing has only a negligible impact on the SVM's performance.

\section{Online evaluation: gender inference for Web surfers}
\label{sec:cliqz}
In addition to the above offline evaluation (\Secref{sec:twitter}), we also tested our method in its intended environment: as part of a software that runs on user\hyp held, Internet\hyp connected devices, in this case the \Cliqz{} web browser
% (undisclosed for double\hyp blind review).
\cite{cliqz}.
Via a collaboration, we deployed SecVM to a sample of \Cliqz{}'s user base of more than 200K daily active users.
For data collection, \Cliqz{} uses the idea of client- instead of server\hyp side data aggregation via a framework called
\HumanWeb
% (real name and citation to be added after review)
and based on a proxy network
\cite{humanweb, modi}.
The work presented in this paper leverages some of the concepts introduced by this framework; but SecVM goes one step further and not only does data aggregation on the client side, but also computations on this data.

\xhdr{Task} As in the offline evaluation (\Secref{sec:twitter}), we decided to build an SVM that can predict a user's gender, but this time not from the words they type, but rather from the words contained in the titles of the websites they have visited. This setting is much more challenging than the one of \Secref{sec:twitter}: users go on- and offline at unpredictable points in time, the number of available users varies heavily between the different times of day, and together with the set of training users, the set of test users changes, too, thus giving less accurate assessments of prediction performance.

\xhdr{Implementation}%
\footnote{Source code for client available at \url{https://github.com/cliqz-oss/browser-core/tree/6945afff7be667ed74b0b7476195678262120baf/modules/secvm/sources}.
Source code for server available at \url{https://github.com/epfl-dlab/secvm-server}.}
We implemented the client side as a module in the \Cliqz{} browser.
It extracts the features (words contained in website titles) from a user's history, and the label (gender) from the HTML code of \url{www.facebook.com} once a user visits that page and logs in.
The clients regularly fetch a static file from a server that describes the experiments that are currently running: the number of features to be used, the current weight vector, etc.
Apart from this, it contains the percentage \(p\) of users that should participate in training, while the others are used for testing.
The first time a user encounters a new experiment, they assign themselves to training with probability \(p\), and to test with probability \(1-p\) (we chose $p=0.7$).
The file fetched from the server also informs the user how much time they have left until the end of the current training iteration to send their data back to the server.
This data consists either of the update packages or a single test package, depending on whether the user is part of the training or the test set.
To avoid temporal correlation attacks, each package is sent at a random point in time between its creation and the end of the training iteration.
IP addresses are hidden from the server by routing the packages through the \HumanWeb proxy network,
\cite{humanweb},
a system that is functionally similar to Tor and prevents identification of users at the network level.\footnote{The technical report describing the \Cliqz{} Human Web \cite{humanweb} contains a discussion on why TOR was not suitable for the \Cliqz{} use case.}

\begin{figure}
    % \centering
    \begin{subfigure}[t]{0.67\columnwidth}
        \includegraphics[width=\textwidth]{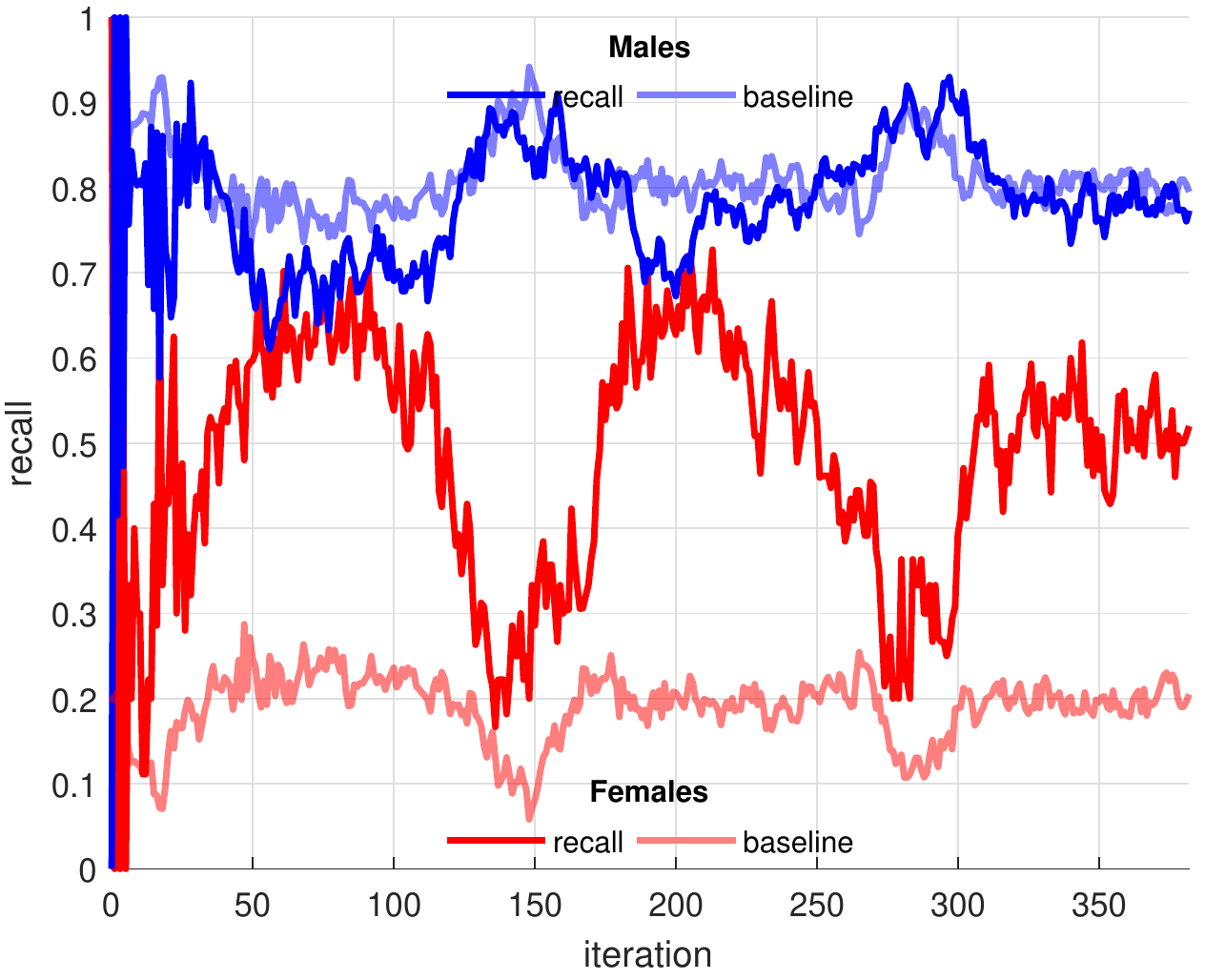}
    	\caption{
    	\textit{Solid red (blue):} \textbf{recall} for females (males).
    	\textit{Transparent red (blue):} marginal baseline for females (males), i.e., share of females (males) in test set.
    	}
        \label{fig:cliqz_10000_bins}
    \end{subfigure}
    \hspace{0.01\textwidth}
    \begin{subfigure}[t]{0.26\columnwidth}
        \includegraphics[width=\textwidth]{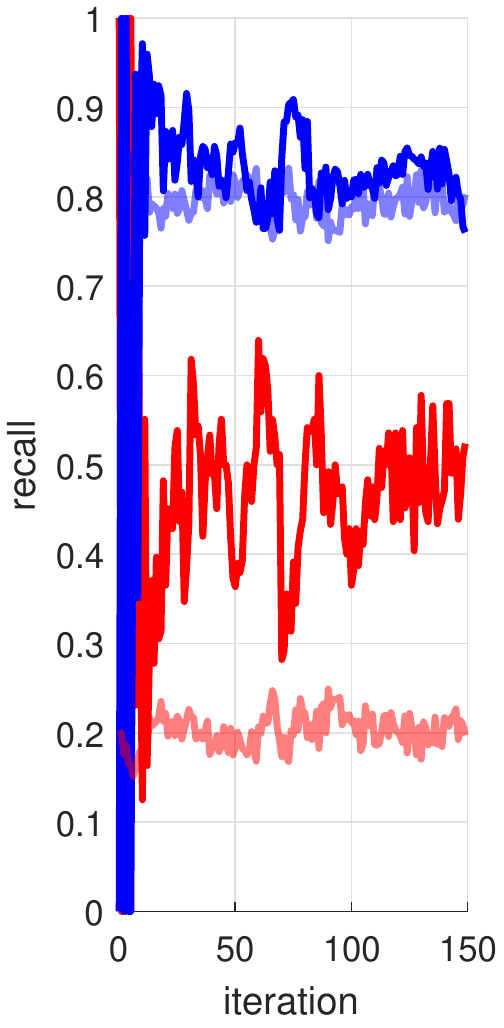}
    	\caption{Same as \Figref{fig:cliqz_10000_bins}, but only trained during daytime.
    	}
        \label{fig:cliqz_10000_bins_day}
    \end{subfigure}
    \begin{subfigure}[t]{0.67\columnwidth}
        \includegraphics[width=\textwidth]{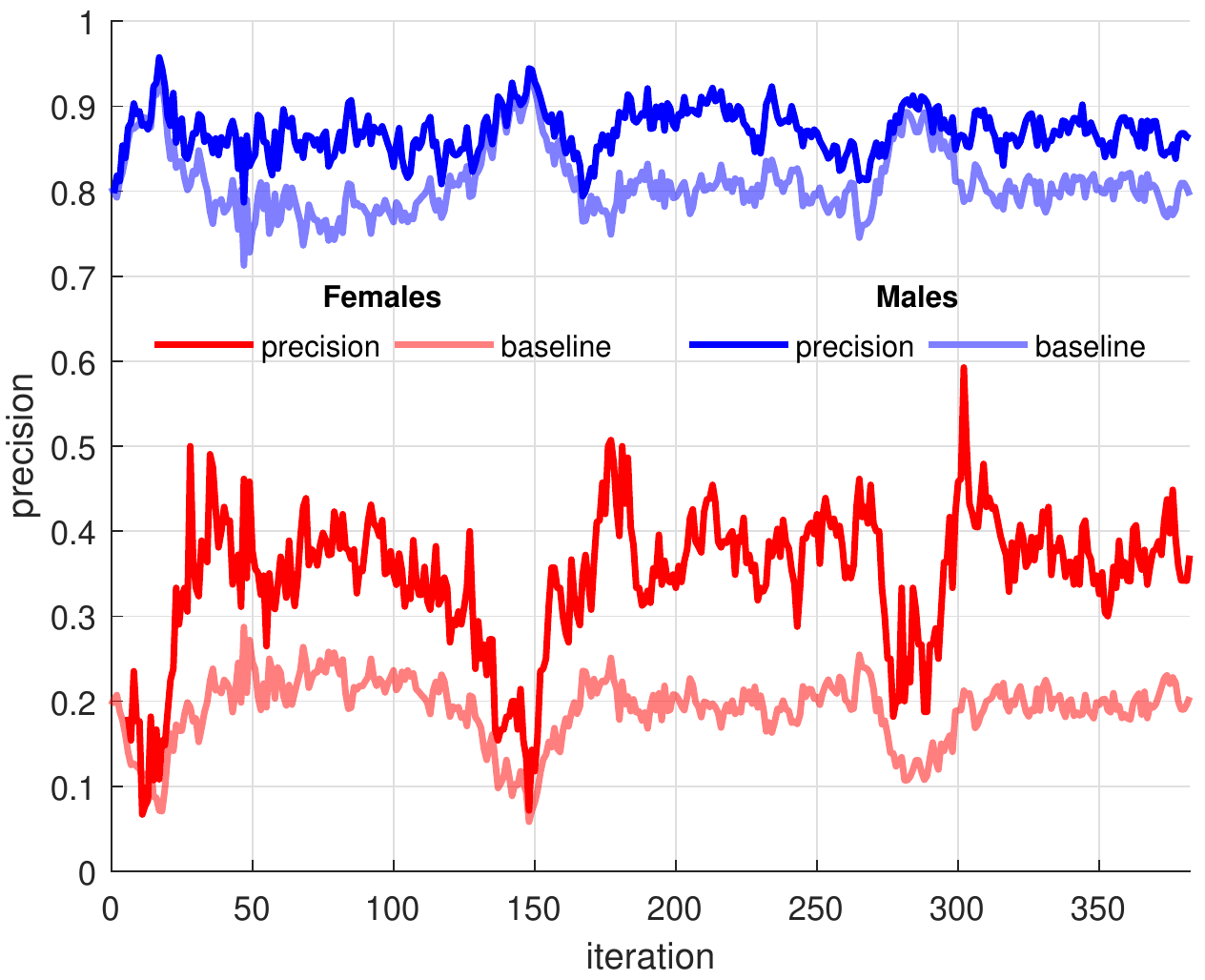}
    	\caption{
    	\textit{Solid red (blue):} \textbf{precision} for females (males).
    	\textit{Transparent red (blue):} marginal baseline for females (males), i.e., share of females (males) in test set.
    	}
        \label{fig:cliqz_10000_bins_prec}
    \end{subfigure}
    \hspace{0.01\textwidth}
    \begin{subfigure}[t]{0.26\columnwidth}
        \includegraphics[width=\textwidth]{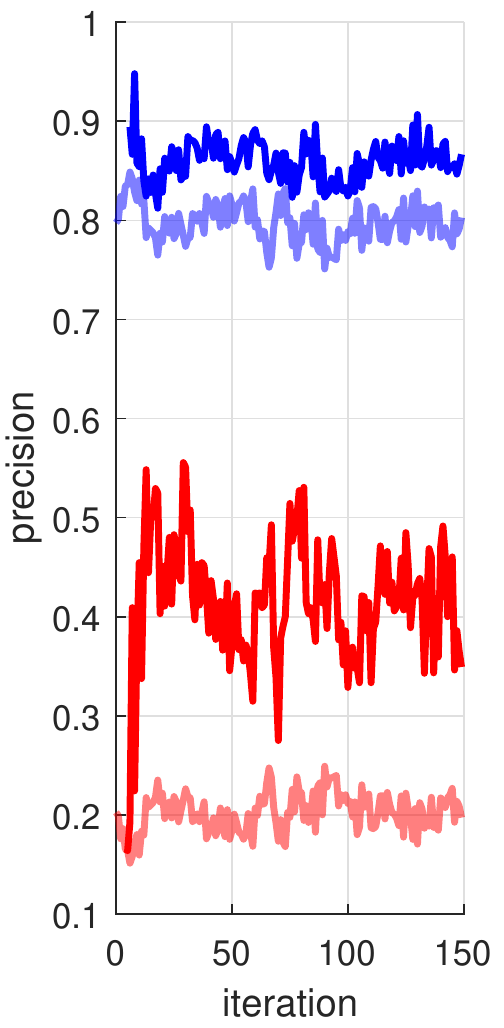}
    	\caption{Same as \Figref{fig:cliqz_10000_bins_prec}, but only trained during daytime.
    	}
        \label{fig:cliqz_10000_bins_day_prec}
    \end{subfigure}
    \hspace{0.01\textwidth}
    \caption{Online evaluation on task of predicting web surfers' gender (\Secref{sec:cliqz}). We chose to perform one update every 11 minutes, so 50 updates took about 9 hours.}
    \label{fig:cliqz}
\end{figure}

\begin{figure}
        \includegraphics[width=\columnwidth]{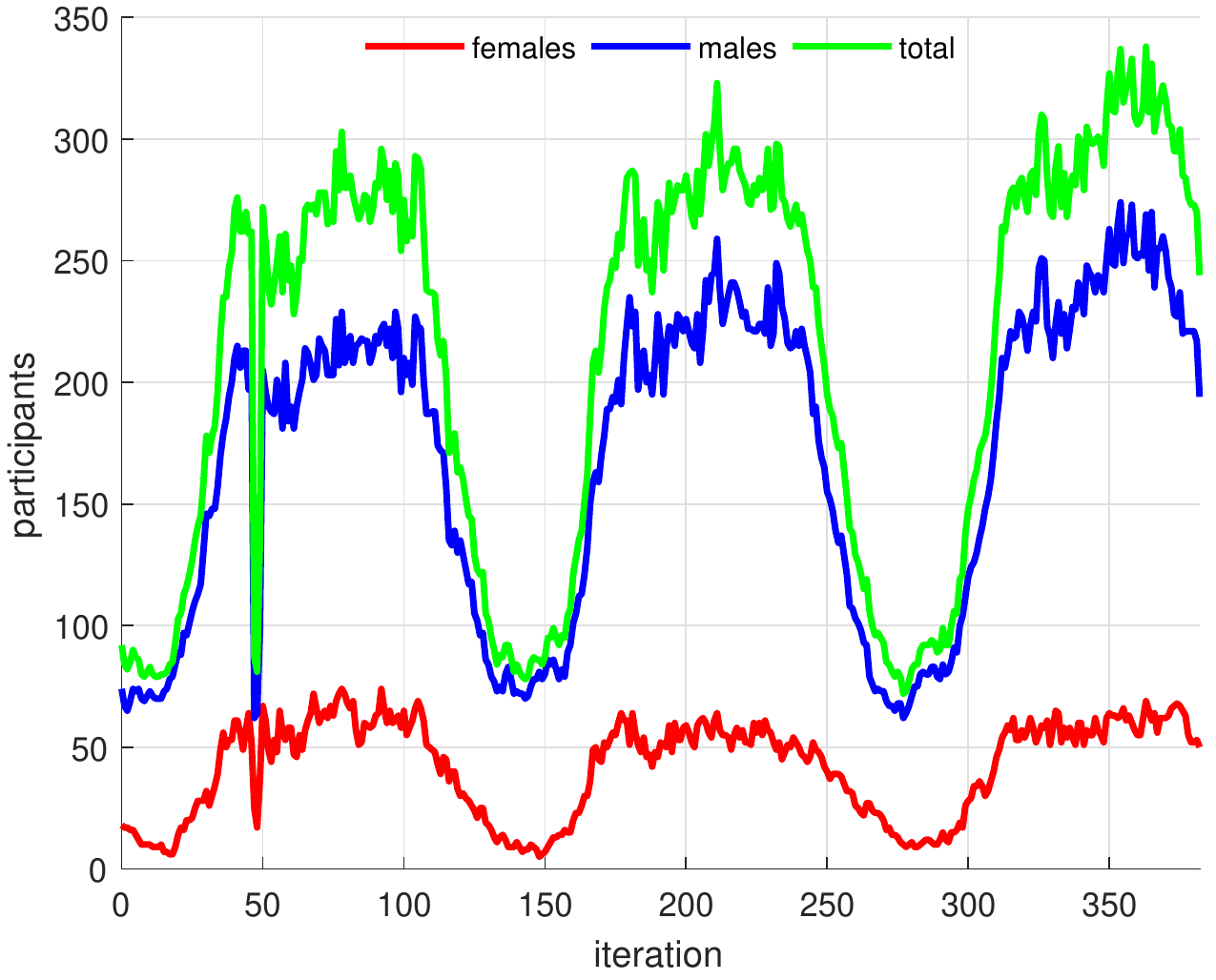}
    	\caption{Online evaluation (\Secref{sec:cliqz}): absolute number of test participants (30\% of all participants) during the training shown in \Figref{fig:cliqz_10000_bins} and \ref{fig:cliqz_10000_bins_prec}.
    	\textit{Red:} females.
    	\textit{Blue:} males.
    	\textit{Green:} total.
    	}
        \label{fig:participants}
\end{figure}

\xhdr{Results}
We performed two runs of the experiment, one for three days during night- and daytime (\Figref{fig:cliqz_10000_bins}), and one for two days only during daytime (\Figref{fig:cliqz_10000_bins_day}). As parameters, we chose \(\lambda=0.1\) as the regularization parameter, 10,000 bins for feature hashing, and, as in the offline experiment on the Twitter dataset (\Secref{sec:twitter}), averaged two subsequent weight vectors.
As can be seen in \Figref{fig:participants}, there were a lot fewer females in the user base, which is why we weighed their updates by a factor of 4.
We set each training iteration to last 11 minutes,
in order to minimize the load on the client side.

In each iteration, we let the users from the test set evaluate the current model.
We evaluate performance in terms of recall for males and females, i.e., the fraction of all males and females, respectively, that the classifier predicts correctly.
Results are shown in
\Figref{fig:cliqz}a--b.
The dark solid lines show recall for female and male users, respectively, the transparent lines their respective share in the test set.
This share is equal to the recall that would be achieved by a baseline classifier randomly guessing each class with its marginal probability (i.e., between 10\% and 20\% for females, depending on the iteration number, and between 80\% and 90\% for males).
We see that recall for males is at around the same (high) level as the baseline, while recall for females (the minority class) lies far above this baseline.
However, despite the weighting, recall for females is still significantly worse than recall for males.
We attribute this to the fact that the number of training samples was much smaller for this class.

Noteworthy are the drops for females and spikes for males around iterations 150 and 300 of the continuous training (\Figref{fig:cliqz_10000_bins}).
They coincide with nighttime and a drop in overall active users, as shown in \Figref{fig:participants}.
At night, the share of female users in the overall population drops to about half, compared to daytime (10\% vs.\ 20\%), and we explain the low female recall during those times by the fact that we see very little training signal for females then.
Hence, one would preferably stop training during the night, which we did in our second run, whose results (\Figref{fig:cliqz_10000_bins_day}) are much more stable.

Besides recall, we also evaluated precision.
It lies far above the baseline for both females and males (\Figref{fig:cliqz}c--d).

\section{Discussion}
\label{sec:discussion}

\xhdr{Other models beyond SVM}
Linear SVMs are among the most popular classification algorithms in research as well as the industry across domains \cite{1757-899X-436-1-012020,CHEN2017314,Liu2017,10.1371/journal.pone.0161501, 7755785, PMID:27919211}.
Their popularity is, however, not the only reason why we chose SVMs as the machine learning model underlying our framework.
The SVM model also has the attractive property that the partial derivatives of the user\hyp dependent part of its loss function can only take on the values $-1$, 0, and 1, which we exploit to make the reconstruction of labels impossible.
To illustrate this advantage, consider what would happen when using, e.g., logistic regression instead.
In this case, we have \(L(w, x_i, y_i) = \log(1 + \exp(-y_i w^T x_i))\) and
\begin{equation}
\label{eqn:logreg}
    \frac{\partial L(w, x_i, y_i)}{\partial w_j} = \frac{y_i x_{ij}}{1 + \exp(y_i w^T x_i)}.
\end{equation}
When all feature values are integers, the numerator of \Eqnref{eqn:logreg} can only take on integer values, allowing the real\hyp valued denominator to uniquely identify user $i$, thus making it easy to associate updates from user \(i\) for different values of \(j\) with each other, even when splitting update vectors into their individual components, as introduced in \Secref{sec:solution} precisely as a counter\hyp measure against such record linkage.
However, we point out that rectifiers---the same function as the SVM loss---are a popular choice as an activation function for neural networks \cite{glorot2011deep}, and there is also research on using binary weights in neural networks \cite{courbariaux2015binaryconnect}.
Both of these observations open avenues to extend our work to neural networks.

\xhdr{Practical considerations} As opposed to federated learning \cite{mcmahan2016federated}, we split the updates from individual users into atomic pieces. Only this allows us to obtain such strong privacy guarantees; but, of course, it comes at the expense of efficiency: due to protocol overhead, sending many small packages instead of one big package produces more traffic both on the client and on the server side, which in turn slows down training, especially if one has no control over the client devices and has to be conservative when estimating the speed of their network connection in order not to congest it. By applying our method in a real\hyp world production setting (\Secref{sec:cliqz}), we showed that this overhead is very much manageable. 

As opposed to the privacy\hyp focused extensions of federated learning \cite{geyer2017differentially,mcmahan2017learning}, SecVM works without adding random noise.
Especially when feature values are mostly small integers,
such noise needs to be large in order to achieve a sufficient level of privacy.
Apart from that, our goal is also a different one.
The aforementioned methods 
\cite{geyer2017differentially,mcmahan2017learning} offer differential privacy guarantees against attacks that aim at exploiting the differences between different iterations of the machine learning model during the training process that could be used by malicious clients to extract other \textit{clients'} data, whereas the server is assumed to be trusted.
SecVM, on the other hand, protects against a malicious \textit{server}.

Often the goal of privacy research in the field of machine learning is to prevent the extraction of sensitive information from the final model. Our approach starts even earlier, by preventing the extraction of sensitive information during the training. Of course the model itself that results from such a training procedure preserves the privacy of the training data as well.

\xhdr{Limitations}
First, we would like to emphasize that for our privacy guarantees to hold, the client module source code must be visible to the clients, as is the case for our experiment in \Secref{sec:cliqz}.

There are two cases in which SecVM can fail to protect privacy. The first one is when being unlucky with the hashing function. When a sensitive feature does not get hashed into the same bin as any other feature, then the server will learn about the presence of this feature. However, the probability of this happening is cryptographically small, as we show in \Appref{sec:app_hashing}.
The second case in which SecVM's privacy protection fails is when being unlucky with the data. If all gradients except from a single one are zero, which implies that the SVM manages to correctly classify all training samples except from this single one, and additionally the \(l_1\text{-norm}\) of all feature vectors is the same, then the server knows that all messages it receives stem from the same user --- the same holds true if only a single user participates in the training. If the server knows the feature values of this user from some other source, it will be able to reconstruct the user's label from the sign of the received messages. But due to the hashing, it will not be able to reconstruct any features. Note also that the event that an SVM achieves an almost perfect classification accuracy is unlikely on real\hyp world datasets due to the limited model complexity.

A different attack vector is via weaknesses of anonymization networks, but this is outside of the scope of this paper; as well as malicious clients trying to poison the training by sending manipulated gradients.

The datasets that we experimented with are only high\hyp dimensional datasets. On these datasets, the hashing method is particularly useful, because it doubles as a method for dimensionality reduction. On small\hyp dimensional datasets, hashing might reduce the dimension to a value so small that the SVM's classification performance drops significantly. However, on typical low\hyp dimensional datasets (think of, e.g., demographic data), hashing will not even be necessary, because every feature is non\hyp zero for a large number of users and thus no single feature is identifying.

\section{Conclusion}
\label{sec:conclusion}
We proposed SecVM, a framework for training an SVM on an \textit{untrusted server} based on distributed data
while preserving the data owners' privacy, without the need for direct access to their data.
Instead, data owners only send model updates to the training server.
The system relies on three main ideas to prevent the reconstruction of sensitive information from those updates:
routing all messages through a proxy network,
splitting the update vector into its components,
and hashing the original features.
We implemented SecVM in the \Cliqz{} browser---a real distributed system---with promising results.

\appendix

\section{Privacy Guarantees}
\label{sec:privacy}

\subsection{Hashing}
\label{sec:app_hashing}

\begin{notation}
Let \(m\) denote the number of unique features and \(n\) the number of bins we hash them into.
The hashing is executed by a function \(h\) drawn uniformly at random from the family of all hash functions from the set of strings to \([n]\), where by \([n]\) we denote the set \(\{1,\dots,n\}\). Probabilities are taken over the random choice of \(h\).
\end{notation}

We give bounds on 3 probabilities that all can be used to determine the number of hash bins to use for a desired level of privacy.

\begin{lemma}
\label{lemma:1}
% $ $\newline
\begin{enumerate}[label=(\alph*)]
    \item Let \(p_1\) be the probability that there exists at least one feature which does not collide with any other feature. Then
    \begin{equation*}
        p_1 \leq m\left(\frac{n-1}{n}\right)^{m-1}.
    \end{equation*}
    \item Let \(K\subseteq [m]\) be a set of specific features, \(k\defeq |K|\). For the probability \(p_2\) that at least one of the features in \(K\) does not collide with any other feature we have that
    \begin{equation*}
        p_2 \leq k \left(\frac{n-1}{n}\right)^{m-1}.
    \end{equation*}
\end{enumerate}
\end{lemma}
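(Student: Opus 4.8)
The plan is to prove both parts by a union bound over individual features, using the fact that drawing $h$ uniformly from the family of all hash functions from strings to $[n]$ makes the bin assignments $h(f_1),\dots,h(f_m)$ of the $m$ distinct features mutually independent and uniform on $[n]$.

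For part~(a), I would fix a feature $f_i$ and compute the probability that $f_i$ is \emph{isolated}, i.e., shares its bin with no other feature. Conditioning on the (arbitrary) value of $h(f_i)$, each of the other $m-1$ features independently lands in a different bin with probability $(n-1)/n$, so $\Pr[f_i \text{ isolated}] = \left(\frac{n-1}{n}\right)^{m-1}$, regardless of $f_i$'s bin. Since the event governing $p_1$ is exactly $\bigcup_{i=1}^m \{f_i \text{ isolated}\}$, the union bound gives $p_1 \le m\left(\frac{n-1}{n}\right)^{m-1}$.

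For part~(b), the per-feature probability is unchanged: an isolated feature must avoid collision with all $m-1$ remaining features, not only with the other members of $K$, so $\Pr[f_i \text{ isolated}] = \left(\frac{n-1}{n}\right)^{m-1}$ for every $i \in K$. Restricting the union bound to the $k = |K|$ features of $K$ then yields $p_2 \le k\left(\frac{n-1}{n}\right)^{m-1}$.

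There is no substantial obstacle here — the statement is essentially a direct application of the union bound, and the only point requiring a little care is keeping the exponent at $m-1$ rather than $m$ (the reference feature's own bin is fixed, and it is the remaining $m-1$ features whose assignments matter). If sharper constants were desired, one could instead compute the expected number of isolated features exactly and apply Markov's inequality, but for the stated inequalities the plain union bound is enough.
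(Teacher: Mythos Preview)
Your proposal is correct and uses essentially the same approach as the paper: a union bound over individual features, together with the independence and uniformity of the bin assignments. The only cosmetic difference is that for part~(a) the paper phrases the union bound over bins (summing $\Pr[\text{exactly one feature lands in bin }i]$ over $i\in[n]$), whereas you take the union directly over features; both decompositions immediately yield $m\left(\tfrac{n-1}{n}\right)^{m-1}$, and for part~(b) the paper's argument is identical to yours.
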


\begin{proof}
% $ $\newline
\begin{enumerate}[label=(\alph*)]
    \item Using a union bound in the first step, we get
    \begin{align*}
        p_1 &\leq \sum_{i=1}^n\Pr[\text{exactly one features hashes into bin }i]\\
        &= \sum_{i=1}^n \sum_{j=1}^m \frac{1}{n} \left(\frac{n-1}{n}\right)^{m-1}\\
        &=m\left(\frac{n-1}{n}\right)^{m-1}.
    \end{align*}
    In the second line we summed over the probabilities for a specific one of the \(m\) features to end up in bin \(i\).
    \item The statement again follows from a union bound argument:
    \begin{align*}
        p_2 &\leq \sum_{i\in K} \Pr[\text{feature \(k\) does not collide}]\\
        &= k \left(\frac{n-1}{n}\right)^{m-1}
    \end{align*}
\end{enumerate}
\end{proof}

\begin{lemma}
\label{lemma:2}
Let \(p_3\) be the probability that each feature collides with at least \(k-1\) other features. Assume that \(k\leq m/n\) (otherwise \(p_3=0\)).
Then
\begin{equation*}
    p_3 \geq 1 - \binom{m}{k-1} \frac{(n-1)^{m-k+1}}{n^{m-1}} \frac{m-k+2}{m-n k +n+1}.
\end{equation*}
\end{lemma}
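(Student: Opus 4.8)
The plan is to bound the complementary ``bad'' event. Let $B$ denote the event that the conclusion fails, i.e., that some non-empty bin receives at least one but at most $k-1$ features; then $p_3 = 1-\Pr[B]$, so it suffices to prove that $\Pr[B]\le \binom{m}{k-1}\frac{(n-1)^{m-k+1}}{n^{m-1}}\frac{m-k+2}{m-nk+n+1}$. The first move is a union bound, taken not over bins or over features directly but over \emph{(bin, exact content)} pairs: for $i\in[n]$ and $S\subseteq[m]$ with $1\le |S|\le k-1$, let $E_{i,S}$ be the event that bin $i$ receives exactly the features in $S$. Every outcome in $B$ has at least one such non-empty small bin, so $B=\bigcup_{i,S}E_{i,S}$. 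Each $E_{i,S}$ has probability $(1/n)^{|S|}\big((n-1)/n\big)^{m-|S|}$, and there are $\binom{m}{\ell}$ sets $S$ of size $\ell$ and $n$ bins, so
\[
  \Pr[B]\;\le\;\sum_{\ell=1}^{k-1}\binom{m}{\ell}\,\frac{(n-1)^{m-\ell}}{n^{m-1}}.
\]

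Next I would bound this sum by its final term up to a geometric factor. Writing $a_\ell:=\binom{m}{\ell}(n-1)^{m-\ell}$, a one-line computation gives $a_{\ell-1}/a_\ell=\frac{\ell(n-1)}{m-\ell+1}$, which is increasing in $\ell$; hence for all $\ell\le k-1$ it is bounded by $r:=\frac{(k-1)(n-1)}{m-k+2}$, its value at $\ell=k-1$. This is exactly where the hypothesis $k\le m/n$ enters: it gives $(k-1)n\le kn-n\le m-n<m$, so $r<1$ (and at the same time $m-nk+n+1\ge n+1>0$, so the asserted bound is well defined). Telescoping the ratios yields $a_\ell\le a_{k-1}\,r^{\,k-1-\ell}$, hence $\sum_{\ell=1}^{k-1}a_\ell\le a_{k-1}\sum_{t\ge 0}r^{\,t}=a_{k-1}/(1-r)$.

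It then remains to simplify the constant. Since $1-r=\frac{(m-k+2)-(k-1)(n-1)}{m-k+2}=\frac{m-nk+n+1}{m-k+2}$, we obtain $\sum_{\ell=1}^{k-1}a_\ell\le\binom{m}{k-1}(n-1)^{m-k+1}\cdot\frac{m-k+2}{m-nk+n+1}$. Dividing by $n^{m-1}$, substituting into the union bound, and passing to the complement $p_3=1-\Pr[B]$ gives precisely the claimed inequality.

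The proof is not deep; the only points to get right are (i) choosing the union bound over (bin, exact-content) pairs, so that it collapses cleanly into $\sum_\ell\binom{m}{\ell}(n-1)^{m-\ell}/n^{m-1}$ — a union over bins alone, or over features alone, does not — and (ii) checking that the geometric estimate telescopes to exactly the factor $\tfrac{m-k+2}{m-nk+n+1}$, which hinges on $k\le m/n$ forcing every term ratio below $1$. Everything else is a routine calculation.
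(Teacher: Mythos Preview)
Your proof is correct and follows essentially the same route as the paper: a union bound over bins and possible contents, reducing to $\sum_{\ell}\binom{m}{\ell}(n-1)^{m-\ell}/n^{m-1}$, and then a geometric majorization of that sum using the ratio $r=(k-1)(n-1)/(m-k+2)<1$. The only noteworthy difference is that you work directly with the complement of $p_3$ (non-empty bins with $1\le\ell\le k-1$ features), whereas the paper first passes to the weaker event ``every bin has at least $k$ features'' and therefore includes the $\ell=0$ term; since the final geometric bound absorbs that extra term anyway, both arguments land on the same inequality.
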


\begin{proof}
Note first that
\begin{align*}
    p_3 &\geq \Pr[\text{at least \(k\) features per bin}]\\
    &= 1 - \Pr[\text{at least one bin with less than \(k\) features}],
\end{align*}
since we exclude the possibility of having empty bins. Then by a union bound
\begin{align*}
    &1 - \Pr[\text{at least one bin with less than \(k\) features}]\\
    &\geq 1 - \sum_{i=1}^n \Pr[\text{\(<k\) features in bin \(i\)}]\\
    &= 1 - \sum_{i=1}^n \sum_{l=0}^{k-1} \Pr[\text{exactly \(l\) features in bin \(i\)}]\\
    &= 1 - \sum_{i=1}^n \sum_{l=0}^{k-1} \sum_{\substack{J\subseteq [m]\\ |J| = l}}\Pr[\text{exactly the features in \(J\) in bin \(i\)}]\\
    &= 1 - \sum_{i=1}^n \sum_{l=0}^{k-1} \sum_{\substack{J\subseteq [m]\\ |J| = l}}\left(\frac{1}{n}\right)^l \left(\frac{n-1}{n}\right)^{m-l}\\
    &= 1 - \sum_{i=1}^n \sum_{l=0}^{k-1} \binom{m}{l} \frac{(n-1)^{m-l}}{n^m}\\
    &= 1 - n \sum_{l=0}^{k-1} \binom{m}{l} \frac{(n-1)^{m-l}}{n^m}\\
    &= 1 - n \left(\frac{n-1}{n}\right)^m \sum_{l=0}^{k-1} \binom{m}{l} \left(\frac{1}{n-1}\right)^l.
\end{align*}
To bound the sum, we adapt a proof of \cite{mathoverflow:binomial}. We observe that
\begin{align*}
    &\frac{\binom{m}{k-1} \left(\frac{1}{n-1}\right)^{k-1} + \binom{m}{k-2} \left(\frac{1}{n-1}\right)^{k-2} + \binom{m}{k-3} \left(\frac{1}{n-1}\right)^{k-3} + \dots}{\binom{m}{k-1} \left(\frac{1}{n-1}\right)^{k-1}}\\
    &= 1 + (n-1)\frac{k-1}{m-k+2}\\
    &+ (n-1)^2 \frac{(k-1)(k-2)}{(m-k+2)(m-k+3)} + \dots,
\end{align*}
which can be bounded from above by the geometric series
\begin{align*}
    &1 + (n-1)\frac{k-1}{m-k+2} + \left((n-1)\frac{k-1}{m-k+2}\right)^2 + \dots\\
    &= \sum_{l=0}^\infty \left((n-1) \frac{k-1}{m-k+2}\right)^l\\
    &= \frac{m-k+2}{m - n k + n + 1}.
\end{align*}
The series converges because \(k\leq m/n\). This calculation yields
\begin{align*}
    &1 - n \left(\frac{n-1}{n}\right)^m \sum_{l=0}^{k-1} \binom{m}{l} \left(\frac{1}{n-1}\right)^l\\
    &\geq 1 - n \left(\frac{n-1}{n}\right)^m \binom{m}{k-1} \left(\frac{1}{n-1}\right)^{k-1} \frac{m-k+2}{m - n k + n + 1}\\
    &= 1 - \binom{m}{k-1} \frac{(n-1)^{m-k+1}}{n^{m-1}} \frac{m-k+2}{m-n k +n+1}.
\end{align*}
\end{proof}

To get some intuition for these quantities, we give an example.
In the Twitter experiment in Sec. 5 with \(m=95,880,008\) we saw no significant decrease in prediction accuracy for \(n=95,880\). For these values, we get \(p_1 < 5\times 10^{-427}\) and \(p_2 < 6 k \times 10^{-435}\). The \(k\) in Lemma \ref{lemma:2} can be chosen between 1 and 1,000. For \(k=700\), we have \(1-p_3<5\times 10^{-19}\).

In \cite{shi2009hash} it was proved that hashing each feature into multiple bins can increase the probability that for each feature there exists at least one bin where it does not collide with any other feature. Of course this only holds if the number of bins is higher than the number of features; also we don't want features that have no collisions. Nevertheless, this is an interesting option even in our case. Assume that the number of features is higher than the number of bins, but that many of them are not very indicative of the label we are trying to predict. Thus we are only interested in preventing collisions between indicative features. If we set \(n\) in Theorem 1 of \cite{shi2009hash} to be the number of indicative features, we obtain a bound for their non-collision probability if we hash all features into multiple bins. In our experiments in Sec. 5 and Sec. 6 we are in a situation where most features are not very indicative --- however, increasing the number of bins each feature is hashed into slightly decreased the accuracy instead of increasing it.

\subsection{Splitting}

As shown in \Secref{sec:solution} of the paper, determining whether a user's label is 1 (-1) is equivalent to determining whether they have contributed to the sum of positive (negative) update vectors. For this task, we allow the server the maximal knowledge, i.e., the knowledge of the entire feature vector, which equals the update vector up to multiplication with -1, 0 or 1.

We formalize the task of the server as the discrimination between two worlds. In world 1, the vectors of all users are random. In world 2, the vectors of all users except from the one to be attacked are random; the vector of the latter one is known to the server.

\begin{lemma}
\label{lemma:3}
Let \(u_j\) denote the \(j\)\hyp th entry of a vector \(u\in \N^d\). Let \(M > 1\) be the number of users participating in the training, \(d>0\) the dimension of the update vectors and \(F>0\) the (fixed) \(\ell_1\)\hyp norm of the update vectors. The model is trained for \(K>0\) iterations.\\
Let \(X^{mkf} \sim U(u\in \N_{\geq 0}^d:\ \text{There exists exactly one }j^*\in \{1,\dots,d\}\allowbreak \text{s.t. }\allowbreak u_{j^*}=1 \text{ and } u_j = 0 \text{ for all } j\neq j')\) be i.i.d. random vectors following the uniform distribution over all one\hyp hot vectors, where \(m \in \{1,\dots,M\},\ k\in\{1,\dots,K\},\ f\in\{1,\dots,F\}\). The update vector of the \(m\)\hyp th user in iteration \(k\) is then given by \(X^{mk}\defeq \sum_{f=1}^F X^{mkf}\). (Here we assume positive updates; the case of negative updates is analogous.) Let \(v^k\geq 0,\ \norm{v}_1 = F,\) be the (possible) update vector in the \(k\)\hyp th training iteration of the user to be attacked. Further let \(s^k\in\N_{\geq 0}^d,\ \norm{s^k}_1=MF,\) be the sum that the server receives in the \(k\)\hyp th iteration. Then
\begin{align*}
    \abs{&\Pr[\sum_{m=1}^M X^{mk} = s^k\quad \text{for all } k=1\dots K]\\
    &- \Pr[\sum_{m=1}^{M-1} X^{mk} + v^k = s^k\quad \text{for all } k=1\dots K]}\\
    \leq &\max\{p(M,F,d),\ p(M-1,F,d)\}^K,
\end{align*}
where
\begin{equation*}
    p(m,f,d)= \frac{(mf)!}{\left(\lfloor\frac{mf}{d}\rfloor!\right)^d} \frac{1}{d^{mf}}.
\end{equation*}
\end{lemma}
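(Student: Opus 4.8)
The plan is to exploit that the update vectors are independent across the $K$ training iterations to factorize both probabilities, and then to bound each single-iteration factor by an upper bound on the maximal point mass of a multinomial distribution with uniform cell probabilities. Concretely, the family $\{X^{mkf}\}$ is i.i.d.\ over all triples $(m,k,f)$, and for each fixed $k$ the event $\{\sum_{m=1}^{M} X^{mk} = s^k\}$ depends only on the variables carrying that index $k$; likewise $\{\sum_{m=1}^{M-1} X^{mk}+v^k=s^k\}$ depends only on those variables, since $v^k$ and $s^k$ are deterministic. Hence the $K$ events are mutually independent, so each probability in the statement equals the product over $k=1,\dots,K$ of its one-iteration version, and it suffices to bound, for a single fixed $k$, the quantities $\Pr[\sum_{m=1}^M X^{mk}=s^k]$ and $\Pr[\sum_{m=1}^{M-1}X^{mk}+v^k=s^k]$.

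Next I would identify these as multinomial masses. For fixed $k$, $\sum_{m=1}^M X^{mk}=\sum_{m,f}X^{mkf}$ is a sum of $MF$ i.i.d.\ vectors, each uniform on the standard basis $\{e_1,\dots,e_d\}$, hence distributed as $\mathrm{Multinomial}(MF;1/d,\dots,1/d)$; since $s^k\in\N_{\geq0}^d$ with $\norm{s^k}_1=MF$, we get $\Pr[\sum_m X^{mk}=s^k]=\frac{(MF)!}{\prod_j s^k_j!}\,d^{-MF}$. Similarly $\{\sum_{m=1}^{M-1}X^{mk}+v^k=s^k\}$ equals $\{\sum_{m=1}^{M-1}X^{mk}=s^k-v^k\}$, which has probability $0$ unless $s^k-v^k\in\N_{\geq0}^d$ (in which case $\norm{s^k-v^k}_1=(M-1)F$) and otherwise equals $\frac{((M-1)F)!}{\prod_j (s^k-v^k)_j!}\,d^{-(M-1)F}$, a $\mathrm{Multinomial}((M-1)F;1/d,\dots,1/d)$ mass.

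The crux is then the elementary estimate that for $n=qd+r$ with $0\le r<d$ and any $a\in\N_{\geq0}^d$ with $\sum_j a_j=n$ one has $\prod_j a_j!\ge(q!)^d$. I would prove this by a smoothing argument: if $a_i\ge a_j+2$, replacing $(a_i,a_j)$ by $(a_i-1,a_j+1)$ multiplies $\prod_j a_j!$ by $(a_j+1)/a_i<1$, so the minimum of $\prod_j a_j!$ is attained at the balanced composition ($r$ coordinates equal to $q+1$, the rest to $q$), whose product of factorials is $((q+1)!)^r(q!)^{d-r}\ge(q!)^d$. Applied with $n=MF$ (so $q=\floor{MF/d}$) and with $n=(M-1)F$, this turns the two multinomial masses into the per-iteration bounds $p(M,F,d)$ and $p(M-1,F,d)$, the degenerate case $s^k-v^k\notin\N_{\geq0}^d$ being trivially $\le p(M-1,F,d)$. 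Writing $P_1,P_2$ for the two probabilities in the lemma, the factorization then gives $0\le P_1\le p(M,F,d)^K$ and $0\le P_2\le p(M-1,F,d)^K$; since $|x-y|\le\max\{x,y\}$ for $x,y\ge0$ and $t\mapsto t^K$ is nondecreasing on $[0,\infty)$, we conclude $|P_1-P_2|\le\max\{p(M,F,d),p(M-1,F,d)\}^K$. (The case of negative updates is symmetric.)

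The step I expect to demand the most care --- elementary rather than deep --- is the combinatorial claim that $\prod_j a_j!\ge(\floor{n/d}!)^d$, i.e.\ that the symmetric multinomial coefficient is maximized at the balanced composition; the smoothing argument settles it, but one should check that it terminates at the balanced vector and note that $p(m,F,d)$ is a (slightly loose) over-estimate, the slack being $((q+1)!)^r(q!)^{d-r}$ versus $(q!)^d$. The remaining things to watch are merely the $\ell_1$-norm bookkeeping and the degenerate cases in which $s^k$ or $s^k-v^k$ leaves $\N_{\geq0}^d$; everything else is routine.
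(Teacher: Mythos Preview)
Your proposal is correct and follows essentially the same route as the paper: factorize across the $K$ independent iterations, recognize each factor as a symmetric multinomial point mass, bound the multinomial coefficient by its maximum at the balanced composition to obtain $p(M,F,d)$ and $p(M-1,F,d)$, and conclude via $|a-b|\le\max\{a,b\}$ for $a,b\ge 0$. Your version is in fact slightly more careful than the paper's, which writes the factorization as a $K$-th power $\Pr[\sum_m X^{m1}=s^1]^K$ (tacitly treating all $s^k$ as giving the same mass) and asserts the bound $\binom{MF}{s_1^1,\dots,s_d^1}\le (MF)!/(\lfloor MF/d\rfloor!)^d$ without justification; your product-over-$k$ formulation and explicit smoothing argument fill these gaps without changing the strategy.
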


\begin{proof}
We show that both probabilities are bounded by the r.h.s. and then use that \(\abs{a-b}\leq\max\{a,\ b\}\) for non\hyp negative \(a\) and \(b\).
\begin{align*}
    &\Pr[\sum_{m=1}^M X^{mk} = s^k\quad \text{for all } k=1\dots K]\\
    &= \Pr[\sum_{m=1}^M X^{m1} = s^1] ^ K\\
    &= \Pr[\sum_{m=1, f=1}^{M, F} X^{m1f} = s^1] ^ K\\
    &= \Pr[\sum_{m=1, f=1}^{M, F} X_i^{m1f} = s_i^1\quad \text{for all } i=1\dots d] ^ K\\
    &= \left(\binom{MF}{s_1^1,\dots,s_d^1} \frac{1}{d^{MF}}\right)^K\\
    &\leq \left(\frac{(MF)!}{\left(\lfloor\frac{MF}{d}\rfloor!\right)^d} \frac{1}{d^{MF}}\right)^k\\
    &= p(M,F,d)^K
\end{align*}
For the second probability we similarly obtain
\begin{align*}
    &\Pr[\sum_{m=1}^{M-1} X^{mk} + v^k = s^k\quad \text{for all } k=1\dots K]\\
    &= \left(\binom{(M-1)F}{s_1^1-v_1^1,\dots,s_d^1-v_d^1} \frac{1}{d^{(M-1)F}}\right)^K\\
    &\leq p(M-1,F,d)^K,
\end{align*}
where we tacitly assumed that \(s_i^k-v_i^k\geq 0\) for all \(i\) and \(k\); otherwise
\begin{equation*}
    \Pr[\sum_{m=1}^{M-1} X^{mk} + v^k = s^k\quad \text{for all } k=1\dots K] = 0.
\end{equation*}
\end{proof}

In the lemma we assume that all user have the same amount of features (as is the case in our online experiment). However, we only use this for simplifying the notation; one can also do without this assumption and get a corresponding bound. One simply has to replace \(MF\) by the total number of features and \((M-1)F\) by this number minus \(\norm{v}_1\) in the probability bounds.

Taking \(M=34,615,\ F=1826,\ d=95,880\) --- the numbers from the Twitter experiment if we pretend to have the same number of features for each user and assume that 10\% of the users send a positive update ---, the lemma gives a bound of \(<5\times 10^{-173411}\).

\bibliographystyle{ACM-Reference-Format}
\bibliography{references}

%%% -*-BibTeX-*-
%%% Do NOT edit. File created by BibTeX with style
%%% ACM-Reference-Format-Journals [18-Jan-2012].

\begin{thebibliography}{39}

%%% ====================================================================
%%% NOTE TO THE USER: you can override these defaults by providing
%%% customized versions of any of these macros before the \bibliography
%%% command.  Each of them MUST provide its own final punctuation,
%%% except for \shownote{}, \showDOI{}, and \showURL{}.  The latter two
%%% do not use final punctuation, in order to avoid confusing it with
%%% the Web address.
%%%
%%% To suppress output of a particular field, define its macro to expand
%%% to an empty string, or better, \unskip, like this:
%%%
%%% \newcommand{\showDOI}[1]{\unskip}   % LaTeX syntax
%%%
%%% \def \showDOI #1{\unskip}           % plain TeX syntax
%%%
%%% ====================================================================

\ifx \showCODEN    \undefined \def \showCODEN     #1{\unskip}     \fi
\ifx \showDOI      \undefined \def \showDOI       #1{#1}\fi
\ifx \showISBNx    \undefined \def \showISBNx     #1{\unskip}     \fi
\ifx \showISBNxiii \undefined \def \showISBNxiii  #1{\unskip}     \fi
\ifx \showISSN     \undefined \def \showISSN      #1{\unskip}     \fi
\ifx \showLCCN     \undefined \def \showLCCN      #1{\unskip}     \fi
\ifx \shownote     \undefined \def \shownote      #1{#1}          \fi
\ifx \showarticletitle \undefined \def \showarticletitle #1{#1}   \fi
\ifx \showURL      \undefined \def \showURL       {\relax}        \fi
% The following commands are used for tagged output and should be
% invisible to TeX
\providecommand\bibfield[2]{#2}
\providecommand\bibinfo[2]{#2}
\providecommand\natexlab[1]{#1}
\providecommand\showeprint[2][]{arXiv:#2}

\bibitem[\protect\citeauthoryear{??}{cli}{2018}]%
        {cliqz}
 \bibinfo{year}{2018}\natexlab{}.
\newblock \bibinfo{title}{{Cliqz}}.
\newblock \bibinfo{howpublished}{\url{https://cliqz.com/}}.
\newblock


\bibitem[\protect\citeauthoryear{Abadi, Chu, Goodfellow, McMahan, Mironov,
  Talwar, and Zhang}{Abadi et~al\mbox{.}}{2016}]%
        {abadi2016deep}
\bibfield{author}{\bibinfo{person}{Martin Abadi}, \bibinfo{person}{Andy Chu},
  \bibinfo{person}{Ian Goodfellow}, \bibinfo{person}{H~Brendan McMahan},
  \bibinfo{person}{Ilya Mironov}, \bibinfo{person}{Kunal Talwar}, {and}
  \bibinfo{person}{Li Zhang}.} \bibinfo{year}{2016}\natexlab{}.
\newblock \showarticletitle{Deep Learning with Differential Privacy}. In
  \bibinfo{booktitle}{\emph{Proceedings of the 2016 ACM SIGSAC Conference on
  Computer and Communications Security}}. \bibinfo{pages}{308--318}.
\newblock


\bibitem[\protect\citeauthoryear{Agrawal and Srikant}{Agrawal and
  Srikant}{2000}]%
        {agrawal2000privacy}
\bibfield{author}{\bibinfo{person}{Rakesh Agrawal} {and}
  \bibinfo{person}{Ramakrishnan Srikant}.} \bibinfo{year}{2000}\natexlab{}.
\newblock \showarticletitle{Privacy-Preserving Data Mining}. In
  \bibinfo{booktitle}{\emph{ACM Sigmod Record}}, Vol.~\bibinfo{volume}{29}.
  \bibinfo{pages}{439--450}.
\newblock


\bibitem[\protect\citeauthoryear{An and Weber}{An and Weber}{2016}]%
        {AnW16}
\bibfield{author}{\bibinfo{person}{Jisun An} {and} \bibinfo{person}{Ingmar
  Weber}.} \bibinfo{year}{2016}\natexlab{}.
\newblock \showarticletitle{{\#}greysanatomy vs. {\#}yankees: Demographics and
  Hashtag Use on Twitter}. In \bibinfo{booktitle}{\emph{Proceedings of the
  Tenth International Conference on Web and Social Media}}.
  \bibinfo{pages}{523--526}.
\newblock


\bibitem[\protect\citeauthoryear{Barbaro and Jr}{Barbaro and Jr}{2006}]%
        {aol:privacy}
\bibfield{author}{\bibinfo{person}{Michael Barbaro} {and}
  \bibinfo{person}{Tom~Zeller Jr}.} \bibinfo{year}{2006}\natexlab{}.
\newblock \bibinfo{title}{A Face Is Exposed for AOL Searcher No. 4417749}.
\newblock
  \bibinfo{howpublished}{\url{http://www.nytimes.com/2006/08/09/technology/09aol.html}}.
\newblock


\bibitem[\protect\citeauthoryear{Bonawitz, Ivanov, Kreuter, Marcedone, McMahan,
  Patel, Ramage, Segal, and Seth}{Bonawitz et~al\mbox{.}}{2017}]%
        {bonawitz2017practical}
\bibfield{author}{\bibinfo{person}{Keith Bonawitz}, \bibinfo{person}{Vladimir
  Ivanov}, \bibinfo{person}{Ben Kreuter}, \bibinfo{person}{Antonio Marcedone},
  \bibinfo{person}{H~Brendan McMahan}, \bibinfo{person}{Sarvar Patel},
  \bibinfo{person}{Daniel Ramage}, \bibinfo{person}{Aaron Segal}, {and}
  \bibinfo{person}{Karn Seth}.} \bibinfo{year}{2017}\natexlab{}.
\newblock \showarticletitle{Practical Secure Aggregation for Privacy-Preserving
  Machine Learning}. In \bibinfo{booktitle}{\emph{Proceedings of the 2017 ACM
  SIGSAC Conference on Computer and Communications Security}}.
  \bibinfo{pages}{1175--1191}.
\newblock


\bibitem[\protect\citeauthoryear{Chen, Pourghasemi, Kornejady, and Zhang}{Chen
  et~al\mbox{.}}{2017}]%
        {CHEN2017314}
\bibfield{author}{\bibinfo{person}{Wei Chen}, \bibinfo{person}{Hamid~Reza
  Pourghasemi}, \bibinfo{person}{Aiding Kornejady}, {and} \bibinfo{person}{Ning
  Zhang}.} \bibinfo{year}{2017}\natexlab{}.
\newblock \showarticletitle{Landslide spatial modeling: Introducing new
  ensembles of ANN, MaxEnt, and SVM machine learning techniques}.
\newblock \bibinfo{journal}{\emph{Geoderma}}  \bibinfo{volume}{305}
  (\bibinfo{year}{2017}), \bibinfo{pages}{314 -- 327}.
\newblock
\showISSN{0016-7061}


\bibitem[\protect\citeauthoryear{Courbariaux, Bengio, and David}{Courbariaux
  et~al\mbox{.}}{2015}]%
        {courbariaux2015binaryconnect}
\bibfield{author}{\bibinfo{person}{Matthieu Courbariaux},
  \bibinfo{person}{Yoshua Bengio}, {and} \bibinfo{person}{Jean-Pierre David}.}
  \bibinfo{year}{2015}\natexlab{}.
\newblock \showarticletitle{BinaryConnect: Training deep neural networks with
  binary weights during propagations}. In \bibinfo{booktitle}{\emph{Advances in
  neural information processing systems}}. \bibinfo{pages}{3123--3131}.
\newblock


\bibitem[\protect\citeauthoryear{{Department of Homeland Security}}{{Department
  of Homeland Security}}{2014}]%
        {leak:insider}
\bibfield{author}{\bibinfo{person}{{Department of Homeland Security}}.}
  \bibinfo{year}{2014}\natexlab{}.
\newblock \bibinfo{title}{Increase in Insider Threat Cases Highlight
  Significant Risks to Business Networks and Proprietary Information}.
\newblock
  \bibinfo{howpublished}{\url{http://www.ic3.gov/media/2014/140923.aspx}}.
\newblock


\bibitem[\protect\citeauthoryear{Dwork, McSherry, Nissim, and Smith}{Dwork
  et~al\mbox{.}}{2006}]%
        {dwork2006calibrating}
\bibfield{author}{\bibinfo{person}{Cynthia Dwork}, \bibinfo{person}{Frank
  McSherry}, \bibinfo{person}{Kobbi Nissim}, {and} \bibinfo{person}{Adam
  Smith}.} \bibinfo{year}{2006}\natexlab{}.
\newblock \showarticletitle{Calibrating Noise to Sensitivity in Private Data
  Analysis}. In \bibinfo{booktitle}{\emph{Theory of Cryptography Conference}}.
  \bibinfo{pages}{265--284}.
\newblock


\bibitem[\protect\citeauthoryear{{European Union}}{{European Union}}{2016}]%
        {eu:gdpr}
\bibfield{author}{\bibinfo{person}{{European Union}}.}
  \bibinfo{year}{2016}\natexlab{}.
\newblock \showarticletitle{{Regulation (EU) 2016/679 of the European
  Parliament and of the Council of 27 April 2016 on the protection of natural
  persons with regard to the processing of personal data and on the free
  movement of such data, and repealing Directive 95/46/EC (General Data
  Protection Regulation)}}.
\newblock \bibinfo{journal}{\emph{Official Journal of the European Union}}
  \bibinfo{volume}{L119} (\bibinfo{year}{2016}), \bibinfo{pages}{1--88}.
\newblock


\bibitem[\protect\citeauthoryear{Evfimievski}{Evfimievski}{2002}]%
        {Evfimievski:2002:RPP:772862.772869}
\bibfield{author}{\bibinfo{person}{Alexandre Evfimievski}.}
  \bibinfo{year}{2002}\natexlab{}.
\newblock \showarticletitle{Randomization in Privacy Preserving Data Mining}.
\newblock \bibinfo{journal}{\emph{SIGKDD Explorations Newsletter}}
  \bibinfo{volume}{4}, \bibinfo{number}{2} (\bibinfo{year}{2002}),
  \bibinfo{pages}{43--48}.
\newblock
\showISSN{1931-0145}


\bibitem[\protect\citeauthoryear{Geyer, Klein, and Nabi}{Geyer
  et~al\mbox{.}}{2017}]%
        {geyer2017differentially}
\bibfield{author}{\bibinfo{person}{Robin~C Geyer}, \bibinfo{person}{Tassilo
  Klein}, {and} \bibinfo{person}{Moin Nabi}.} \bibinfo{year}{2017}\natexlab{}.
\newblock \showarticletitle{Differentially Private Federated Learning: A Client
  Level Perspective}.
\newblock \bibinfo{journal}{\emph{arXiv preprint arXiv:1712.07557}}
  (\bibinfo{year}{2017}).
\newblock


\bibitem[\protect\citeauthoryear{Glorot, Bordes, and Bengio}{Glorot
  et~al\mbox{.}}{2011}]%
        {glorot2011deep}
\bibfield{author}{\bibinfo{person}{Xavier Glorot}, \bibinfo{person}{Antoine
  Bordes}, {and} \bibinfo{person}{Yoshua Bengio}.}
  \bibinfo{year}{2011}\natexlab{}.
\newblock \showarticletitle{Deep Sparse Rectifier Neural Networks}. In
  \bibinfo{booktitle}{\emph{Proceedings of the Fourteenth International
  Conference on Artificial Intelligence and Statistics}}.
  \bibinfo{pages}{315--323}.
\newblock


\bibitem[\protect\citeauthoryear{Greenwald and MacAskill}{Greenwald and
  MacAskill}{2013}]%
        {legal:theguardian}
\bibfield{author}{\bibinfo{person}{Glenn Greenwald} {and} \bibinfo{person}{Ewen
  MacAskill}.} \bibinfo{year}{2013}\natexlab{}.
\newblock \bibinfo{title}{{NSA} {Prism} program taps in to user data of
  {A}pple, {G}oogle and others}.
\newblock
  \bibinfo{howpublished}{\url{http://www.theguardian.com/world/2013/jun/06/us-tech-giants-nsa-data}}.
\newblock


\bibitem[\protect\citeauthoryear{Hsu and Lin}{Hsu and Lin}{2002}]%
        {hsu2002comparison}
\bibfield{author}{\bibinfo{person}{Chih-Wei Hsu} {and}
  \bibinfo{person}{Chih-Jen Lin}.} \bibinfo{year}{2002}\natexlab{}.
\newblock \showarticletitle{A Comparison of Methods for Multiclass Support
  Vector Machines}.
\newblock \bibinfo{journal}{\emph{IEEE transactions on Neural Networks}}
  \bibinfo{volume}{13}, \bibinfo{number}{2} (\bibinfo{year}{2002}),
  \bibinfo{pages}{415--425}.
\newblock


\bibitem[\protect\citeauthoryear{Huang, Chen, Lin, Ke, and Tsai}{Huang
  et~al\mbox{.}}{2017}]%
        {10.1371/journal.pone.0161501}
\bibfield{author}{\bibinfo{person}{Min-Wei Huang}, \bibinfo{person}{Chih-Wen
  Chen}, \bibinfo{person}{Wei-Chao Lin}, \bibinfo{person}{Shih-Wen Ke}, {and}
  \bibinfo{person}{Chih-Fong Tsai}.} \bibinfo{year}{2017}\natexlab{}.
\newblock \showarticletitle{SVM and SVM Ensembles in Breast Cancer Prediction}.
\newblock \bibinfo{journal}{\emph{PLOS ONE}} \bibinfo{volume}{12},
  \bibinfo{number}{1} (\bibinfo{year}{2017}), \bibinfo{pages}{1--14}.
\newblock


\bibitem[\protect\citeauthoryear{Li, Li, and Venkatasubramanian}{Li
  et~al\mbox{.}}{2007}]%
        {li2007t}
\bibfield{author}{\bibinfo{person}{Ninghui Li}, \bibinfo{person}{Tiancheng Li},
  {and} \bibinfo{person}{Suresh Venkatasubramanian}.}
  \bibinfo{year}{2007}\natexlab{}.
\newblock \showarticletitle{t-closeness: Privacy beyond k-anonymity and
  l-diversity}. In \bibinfo{booktitle}{\emph{IEEE 23rd International Conference
  on Data Engineering}}. \bibinfo{pages}{106--115}.
\newblock


\bibitem[\protect\citeauthoryear{Liu, Choo, Wang, and Huang}{Liu
  et~al\mbox{.}}{2017}]%
        {Liu2017}
\bibfield{author}{\bibinfo{person}{Peng Liu},
  \bibinfo{person}{Kim-Kwang~Raymond Choo}, \bibinfo{person}{Lizhe Wang}, {and}
  \bibinfo{person}{Fang Huang}.} \bibinfo{year}{2017}\natexlab{}.
\newblock \showarticletitle{SVM or deep learning? A comparative study on remote
  sensing image classification}.
\newblock \bibinfo{journal}{\emph{Soft Computing}} \bibinfo{volume}{21},
  \bibinfo{number}{23} (\bibinfo{year}{2017}), \bibinfo{pages}{7053--7065}.
\newblock


\bibitem[\protect\citeauthoryear{Liu, Cao, Yoshikawa, and Chen}{Liu
  et~al\mbox{.}}{2020}]%
        {liu2020fedsel}
\bibfield{author}{\bibinfo{person}{Ruixuan Liu}, \bibinfo{person}{Yang Cao},
  \bibinfo{person}{Masatoshi Yoshikawa}, {and} \bibinfo{person}{Hong Chen}.}
  \bibinfo{year}{2020}\natexlab{}.
\newblock \showarticletitle{{FedSel}: Federated {SGD} under Local Differential
  Privacy with Top-k Dimension Selection}.
\newblock \bibinfo{journal}{\emph{arXiv preprint arXiv:2003.10637}}
  (\bibinfo{year}{2020}).
\newblock


\bibitem[\protect\citeauthoryear{Lugo}{Lugo}{2017}]%
        {mathoverflow:binomial}
\bibfield{author}{\bibinfo{person}{Michael Lugo}.}
  \bibinfo{year}{2017}\natexlab{}.
\newblock \bibinfo{title}{{Sum of 'the first k' binomial coefficients for fixed
  n}}.
\newblock \bibinfo{howpublished}{MathOverflow}.
\newblock
\newblock
\shownote{\url{https://mathoverflow.net/q/17236} (version: 2017-10-01).}


\bibitem[\protect\citeauthoryear{Machanavajjhala, Gehrke, Kifer, and
  Venkitasubramaniam}{Machanavajjhala et~al\mbox{.}}{2006}]%
        {machanavajjhala2006diversity}
\bibfield{author}{\bibinfo{person}{Ashwin Machanavajjhala},
  \bibinfo{person}{Johannes Gehrke}, \bibinfo{person}{Daniel Kifer}, {and}
  \bibinfo{person}{Muthuramakrishnan Venkitasubramaniam}.}
  \bibinfo{year}{2006}\natexlab{}.
\newblock \showarticletitle{l-Diversity: Privacy Beyond k-Anonymity}. In
  \bibinfo{booktitle}{\emph{Proceedings of the 22nd International Conference on
  Data Engineering}}. \bibinfo{pages}{24--24}.
\newblock


\bibitem[\protect\citeauthoryear{Marunnan, Pulikkal, Jabamalairaj, Bandaru,
  Yadav, Nayarisseri, and Doss}{Marunnan et~al\mbox{.}}{2017}]%
        {PMID:27919211}
\bibfield{author}{\bibinfo{person}{Sahila~Mohammed Marunnan},
  \bibinfo{person}{Babitha~Pallikkara Pulikkal}, \bibinfo{person}{Anitha
  Jabamalairaj}, \bibinfo{person}{Srinivas Bandaru}, \bibinfo{person}{Mukesh
  Yadav}, \bibinfo{person}{Anuraj Nayarisseri}, {and}
  \bibinfo{person}{Victor~Arokia Doss}.} \bibinfo{year}{2017}\natexlab{}.
\newblock \showarticletitle{Development of MLR and SVM Aided QSAR Models to
  Identify Common SAR of GABA Uptake Herbal Inhibitors used in the Treatment of
  Schizophrenia}.
\newblock \bibinfo{journal}{\emph{Current Neuropharmacology}}
  \bibinfo{volume}{15}, \bibinfo{number}{8} (\bibinfo{year}{2017}),
  \bibinfo{pages}{1085—1092}.
\newblock
\showISSN{1570-159X}


\bibitem[\protect\citeauthoryear{McMahan, Moore, Ramage, Hampson, and {Ag\"uera
  y Arcas}}{McMahan et~al\mbox{.}}{2017a}]%
        {mcmahan2016federated}
\bibfield{author}{\bibinfo{person}{Brendan McMahan}, \bibinfo{person}{Eider
  Moore}, \bibinfo{person}{Daniel Ramage}, \bibinfo{person}{Seth Hampson},
  {and} \bibinfo{person}{Blaise {Ag\"uera y Arcas}}.}
  \bibinfo{year}{2017}\natexlab{a}.
\newblock \showarticletitle{{Communication-Efficient Learning of Deep Networks
  from Decentralized Data}}. In \bibinfo{booktitle}{\emph{Proceedings of the
  20th International Conference on Artificial Intelligence and Statistics}}
  \emph{(\bibinfo{series}{Proceedings of Machine Learning Research})}.
\newblock


\bibitem[\protect\citeauthoryear{McMahan, Ramage, Talwar, and Zhang}{McMahan
  et~al\mbox{.}}{2017b}]%
        {mcmahan2017learning}
\bibfield{author}{\bibinfo{person}{H.~Brendan McMahan}, \bibinfo{person}{Daniel
  Ramage}, \bibinfo{person}{Kunal Talwar}, {and} \bibinfo{person}{Li Zhang}.}
  \bibinfo{year}{2017}\natexlab{b}.
\newblock \showarticletitle{Learning Differentially Private Language Models
  Without Losing Accuracy}.
\newblock \bibinfo{journal}{\emph{arXiv preprint arXiv:1710.06963}}
  (\bibinfo{year}{2017}).
\newblock


\bibitem[\protect\citeauthoryear{Modi, Caterineu, Classen, and Pujol}{Modi
  et~al\mbox{.}}{2017}]%
        {humanweb}
\bibfield{author}{\bibinfo{person}{Konark Modi}, \bibinfo{person}{Alex
  Caterineu}, \bibinfo{person}{Philipe Classen}, {and}
  \bibinfo{person}{Josep~M. Pujol}.} \bibinfo{year}{2017}\natexlab{}.
\newblock \bibinfo{booktitle}{\emph{Human Web Overview}}.
\newblock \bibinfo{type}{{T}echnical {R}eport}. \bibinfo{institution}{{Cliqz}}.
\newblock


\bibitem[\protect\citeauthoryear{Modi and Pujol}{Modi and Pujol}{2015}]%
        {modi}
\bibfield{author}{\bibinfo{person}{Konark Modi} {and} \bibinfo{person}{Josep~M.
  Pujol}.} \bibinfo{year}{2015}\natexlab{}.
\newblock \showarticletitle{Collecting User's Data in a Socially-Responsible
  Manner}. In \bibinfo{booktitle}{\emph{European Big Data Conference}}.
\newblock


\bibitem[\protect\citeauthoryear{Nakashima}{Nakashima}{2007}]%
        {legal:verizon}
\bibfield{author}{\bibinfo{person}{Elen Nakashima}.}
  \bibinfo{year}{2007}\natexlab{}.
\newblock \bibinfo{title}{Verizon Says It Turned Over Data Without Court
  Orders}.
\newblock
  \bibinfo{howpublished}{\url{http://www.washingtonpost.com/wp-dyn/content/article/2007/10/15/AR2007101501857_pf.html}}.
\newblock


\bibitem[\protect\citeauthoryear{Narayanan and Shmatikov}{Narayanan and
  Shmatikov}{2008}]%
        {narayanan2008robust}
\bibfield{author}{\bibinfo{person}{Arvind Narayanan} {and}
  \bibinfo{person}{Vitaly Shmatikov}.} \bibinfo{year}{2008}\natexlab{}.
\newblock \showarticletitle{Robust De-anonymization of Large Sparse Datasets}.
  In \bibinfo{booktitle}{\emph{IEEE Symposium on Security and Privacy}}.
  \bibinfo{pages}{111--125}.
\newblock


\bibitem[\protect\citeauthoryear{{NIST}}{{NIST}}{2017}]%
        {nist}
\bibfield{author}{\bibinfo{person}{{NIST}}.} \bibinfo{year}{2017}\natexlab{}.
\newblock \bibinfo{title}{{NIST Randomness Beacon}}.
\newblock
  \bibinfo{howpublished}{\url{https://www.nist.gov/programs-projects/nist-randomness-beacon}}.
\newblock


\bibitem[\protect\citeauthoryear{Peterson}{Peterson}{2015}]%
        {leak:bankrupt}
\bibfield{author}{\bibinfo{person}{Andrea Peterson}.}
  \bibinfo{year}{2015}\natexlab{}.
\newblock \bibinfo{title}{Bankrupt RadioShack wants to sell off user data. But
  the bigger risk is if a Facebook or Google goes bust.}
\newblock
  \bibinfo{howpublished}{\url{https://www.washingtonpost.com/news/the-switch/wp/2015/03/26/bankrupt-radioshack-wants-to-sell-off-user-data-but-the-bigger-risk-is-if-a-facebook-or-google-goes-bust/}}.
\newblock


\bibitem[\protect\citeauthoryear{Samarati and Sweeney}{Samarati and
  Sweeney}{1998}]%
        {samarati1998protecting}
\bibfield{author}{\bibinfo{person}{Pierangela Samarati} {and}
  \bibinfo{person}{Latanya Sweeney}.} \bibinfo{year}{1998}\natexlab{}.
\newblock \bibinfo{booktitle}{\emph{Protecting Privacy when Disclosing
  Information: k-Anonymity and Its Enforcement through Generalization and
  Suppression}}.
\newblock \bibinfo{type}{{T}echnical {R}eport}. \bibinfo{institution}{SRI
  International}.
\newblock


\bibitem[\protect\citeauthoryear{Shalev-Shwartz, Singer, Srebro, and
  Cotter}{Shalev-Shwartz et~al\mbox{.}}{2011}]%
        {Shalev-Shwartz2011}
\bibfield{author}{\bibinfo{person}{Shai Shalev-Shwartz}, \bibinfo{person}{Yoram
  Singer}, \bibinfo{person}{Nathan Srebro}, {and} \bibinfo{person}{Andrew
  Cotter}.} \bibinfo{year}{2011}\natexlab{}.
\newblock \showarticletitle{Pegasos: primal estimated sub-gradient solver for
  SVM}.
\newblock \bibinfo{journal}{\emph{Mathematical Programming}}
  \bibinfo{volume}{127}, \bibinfo{number}{1} (\bibinfo{year}{2011}),
  \bibinfo{pages}{3--30}.
\newblock


\bibitem[\protect\citeauthoryear{Shi, Petterson, Dror, Langford, Smola, Strehl,
  and Vishwanathan}{Shi et~al\mbox{.}}{2009}]%
        {shi2009hash}
\bibfield{author}{\bibinfo{person}{Qinfeng Shi}, \bibinfo{person}{James
  Petterson}, \bibinfo{person}{Gideon Dror}, \bibinfo{person}{John Langford},
  \bibinfo{person}{Alex Smola}, \bibinfo{person}{Alex Strehl}, {and}
  \bibinfo{person}{Vishy Vishwanathan}.} \bibinfo{year}{2009}\natexlab{}.
\newblock \showarticletitle{Hash Kernels}. In
  \bibinfo{booktitle}{\emph{Artificial Intelligence and Statistics}}.
\newblock


\bibitem[\protect\citeauthoryear{Syverson, Dingledine, and Mathewson}{Syverson
  et~al\mbox{.}}{2004}]%
        {syverson2004tor}
\bibfield{author}{\bibinfo{person}{Paul Syverson}, \bibinfo{person}{R
  Dingledine}, {and} \bibinfo{person}{N Mathewson}.}
  \bibinfo{year}{2004}\natexlab{}.
\newblock \showarticletitle{Tor: The Second-Generation Onion Router}. In
  \bibinfo{booktitle}{\emph{Usenix Security}}.
\newblock


\bibitem[\protect\citeauthoryear{Truex, Baracaldo, Anwar, Steinke, Ludwig,
  Zhang, and Zhou}{Truex et~al\mbox{.}}{2019}]%
        {truex2019hybrid}
\bibfield{author}{\bibinfo{person}{Stacey Truex}, \bibinfo{person}{Nathalie
  Baracaldo}, \bibinfo{person}{Ali Anwar}, \bibinfo{person}{Thomas Steinke},
  \bibinfo{person}{Heiko Ludwig}, \bibinfo{person}{Rui Zhang}, {and}
  \bibinfo{person}{Yi Zhou}.} \bibinfo{year}{2019}\natexlab{}.
\newblock \showarticletitle{A Hybrid Approach to Privacy-Preserving Federated
  Learning}. In \bibinfo{booktitle}{\emph{Proceedings of the 12th ACM Workshop
  on Artificial Intelligence and Security}}. \bibinfo{pages}{1--11}.
\newblock


\bibitem[\protect\citeauthoryear{Wang, Pedersen, Agu, Strong, and Tulu}{Wang
  et~al\mbox{.}}{2017}]%
        {7755785}
\bibfield{author}{\bibinfo{person}{L. Wang}, \bibinfo{person}{P.~C. Pedersen},
  \bibinfo{person}{E. Agu}, \bibinfo{person}{D.~M. Strong}, {and}
  \bibinfo{person}{B. Tulu}.} \bibinfo{year}{2017}\natexlab{}.
\newblock \showarticletitle{Area Determination of Diabetic Foot Ulcer Images
  Using a Cascaded Two-Stage SVM-Based Classification}.
\newblock \bibinfo{journal}{\emph{IEEE Transactions on Biomedical Engineering}}
  \bibinfo{volume}{64}, \bibinfo{number}{9} (\bibinfo{year}{2017}),
  \bibinfo{pages}{2098--2109}.
\newblock


\bibitem[\protect\citeauthoryear{Wikipedia}{Wikipedia}{2020}]%
        {data:breaches}
\bibfield{author}{\bibinfo{person}{Wikipedia}.}
  \bibinfo{year}{2020}\natexlab{}.
\newblock \bibinfo{title}{List of data breaches}.
\newblock
  \bibinfo{howpublished}{\url{https://en.wikipedia.org/w/index.php?title=List_of_data_breaches&oldid=973158992}}.
\newblock


\bibitem[\protect\citeauthoryear{Ye, Fuh, Zhang, Hong, and Zhu}{Ye
  et~al\mbox{.}}{2018}]%
        {1757-899X-436-1-012020}
\bibfield{author}{\bibinfo{person}{D~S Ye}, \bibinfo{person}{Y~H~J Fuh},
  \bibinfo{person}{Y~J Zhang}, \bibinfo{person}{G~S Hong}, {and}
  \bibinfo{person}{K~P Zhu}.} \bibinfo{year}{2018}\natexlab{}.
\newblock \showarticletitle{Defects Recognition in Selective Laser Melting with
  Acoustic Signals by SVM Based on Feature Reduction}.
\newblock \bibinfo{journal}{\emph{IOP Conference Series: Materials Science and
  Engineering}} \bibinfo{volume}{436}, \bibinfo{number}{1}
  (\bibinfo{year}{2018}), \bibinfo{pages}{012020}.
\newblock


\end{thebibliography}

\end{document}